\documentclass{article} 
\usepackage{iclr2027_conference,times}

\usepackage{amsmath,amsfonts,bm}

\def\eqref#1{equation~\ref{#1}}

\def\1{\bm{1}}

\DeclareMathAlphabet{\mathsfit}{\encodingdefault}{\sfdefault}{m}{sl}
\SetMathAlphabet{\mathsfit}{bold}{\encodingdefault}{\sfdefault}{bx}{n}

\newcommand{\parent}{\ensuremath{\text{parent}}}
\newcommand{\sft}{\ensuremath{\text{SFT}}}
\newcommand{\sftc}{\ensuremath{\text{SAC}}}

\newcommand{\prompt}{\ensuremath{I}}
\newcommand{\err}{\ensuremath{\mathcal E}}

\usepackage{url}

\usepackage{tcolorbox}
\tcbuselibrary{breakable}
\usepackage{booktabs}
\usepackage{tabularx}
\usepackage{enumitem}
\usepackage[table]{xcolor}

\usepackage{amsthm}
\newtheorem{theorem}{Theorem}

\newtheorem{assumption}{Assumption}
\renewcommand{\Pr}{\operatorname{Pr}}

\usepackage{tocloft}
\usepackage{etoc}

\usepackage{hyperref}
\usepackage{cleveref}
\usepackage{xurl}

\title{SFT-as-Context Mitigates Forgetting \\ in Supervised Fine-Tuning}

\author{Kenan Tang\thanks{Equal contribution}, Andong Hua\footnotemark[1], Chengxuan Qian, Saket Tiwari, Yao Qin \\
University of California, Santa Barbara\\
\texttt{\{kenantang, dongx1997, chengxuanqian, saket, yaoqin\}@ucsb.edu}
}

\iclrfinalcopy 
\begin{document}

\maketitle

\begin{abstract}
Supervised fine-tuning (SFT) equips large language models (LLMs) with specialized capabilities, but often comes at the cost of forgetting the general capabilities of their parent models (i.e., the pretrained models before fine-tuning).
This trade-off is especially limiting for queries that require both specialized and general capabilities.
We introduce SFT-as-context, a training-free method in which the parent model uses the SFT model's response as context to answer the query.
This allows the parent model to acquire fine-tuned capabilities from the SFT response through in-context learning while preserving its own general capabilities.
Across \textbf{19 parent-SFT model pairs and 11 benchmarks}, SFT-as-context remains close to the SFT models on fine-tuned capabilities, with gaps of only 2.2 and 2.1 percentage points on AIME 2024 and LiveCodeBench and 2.0 macro MAE on NutriBench-English, while staying within 2.2 percentage points of the parent models on general capabilities on average.
Remarkably, it can solve queries requiring both fine-tuned and general capabilities, even when neither the parent nor SFT model succeeds alone. This approach also extends beyond parent-SFT pairs: responses from a small open-source SFT model can improve a strong \emph{closed-source} LLM, outperforming either model alone.
Furthermore, we use a Bayesian framework to derive theoretical guarantees that bound the error of SFT-as-context relative to the SFT model on fine-tuned capabilities and to the parent model on general capabilities.
In addition, we visualize the attention weights and find that the parent model attends more to useful SFT responses and less to irrelevant ones, suggesting that selective attention helps the parent model use the SFT response through in-context learning.

\end{abstract}

\section{Introduction}
Supervised fine-tuning (SFT) is widely used to adapt general-purpose large language models (LLMs) to specialized tasks and domains for practical deployment~\citep{chung2024scaling, zeng-etal-2024-agenttuning, guo2025deepseek}.
However, SFT models suffer from \textit{\textbf{forgetting}}, where fine-tuned models lose general capabilities that their parent models (i.e., the pretrained models before fine-tuning) already possess~\citep{kirkpatrick2017overcoming, liu-etal-2024-catastrophic, vu-etal-2022-overcoming}.
For example, \Cref{fig:capability_demonstration}(a) shows that a model fine-tuned for nutrition estimation on English meal-description queries achieves higher accuracy than the parent model, but it follows the fine-tuning output format and thus incorrectly assigns a carbohydrate value even when the input is a non-food query such as ``Hello, world!'', as shown in \Cref{fig:capability_demonstration}(b). In contrast, the parent model correctly recognizes that the input is unrelated to food or nutrition. This strongly indicates that the SFT model forgets a general language-understanding capability, which is how to respond appropriately to non-food queries.

This forgetting problem suffered by SFT models becomes particularly severe when a single query requires both the fine-tuned capabilities gained through fine-tuning and the general capabilities. For example, as shown in \Cref{fig:capability_demonstration}(c), when the meal-description query is in a language different from the fine-tuned language, the model must accurately estimate nutrition using its fine-tuned capability while responding in the same language as the input using its general multilingual capability, so that the user can understand the response. The SFT model significantly improves accuracy over its parent but, unlike the parent model, fails to respond in the user's language. This practical setting demonstrates that neither the SFT model nor the parent model alone satisfies both requirements.

Most existing approaches seek to mitigate forgetting through changes to training, including regularization~\citep{li2017learning,chen2020recall}, replay~\citep{sun2020lamol,he2024seekr}, constraints on parameter updates~\citep{lopez2017gradient,ICLR2026_4e447acb}, and the choice of training method~\citep{chen2026retaining,biderman2024lora}.
However, these approaches require modifying the fine-tuning process and thus cannot be directly applied, without further training, to existing ready-to-use checkpoints on platforms such as Hugging Face.\footnote{\url{https://huggingface.co/}}

\begin{figure}[t]
    \centering
    \includegraphics[width=\linewidth]{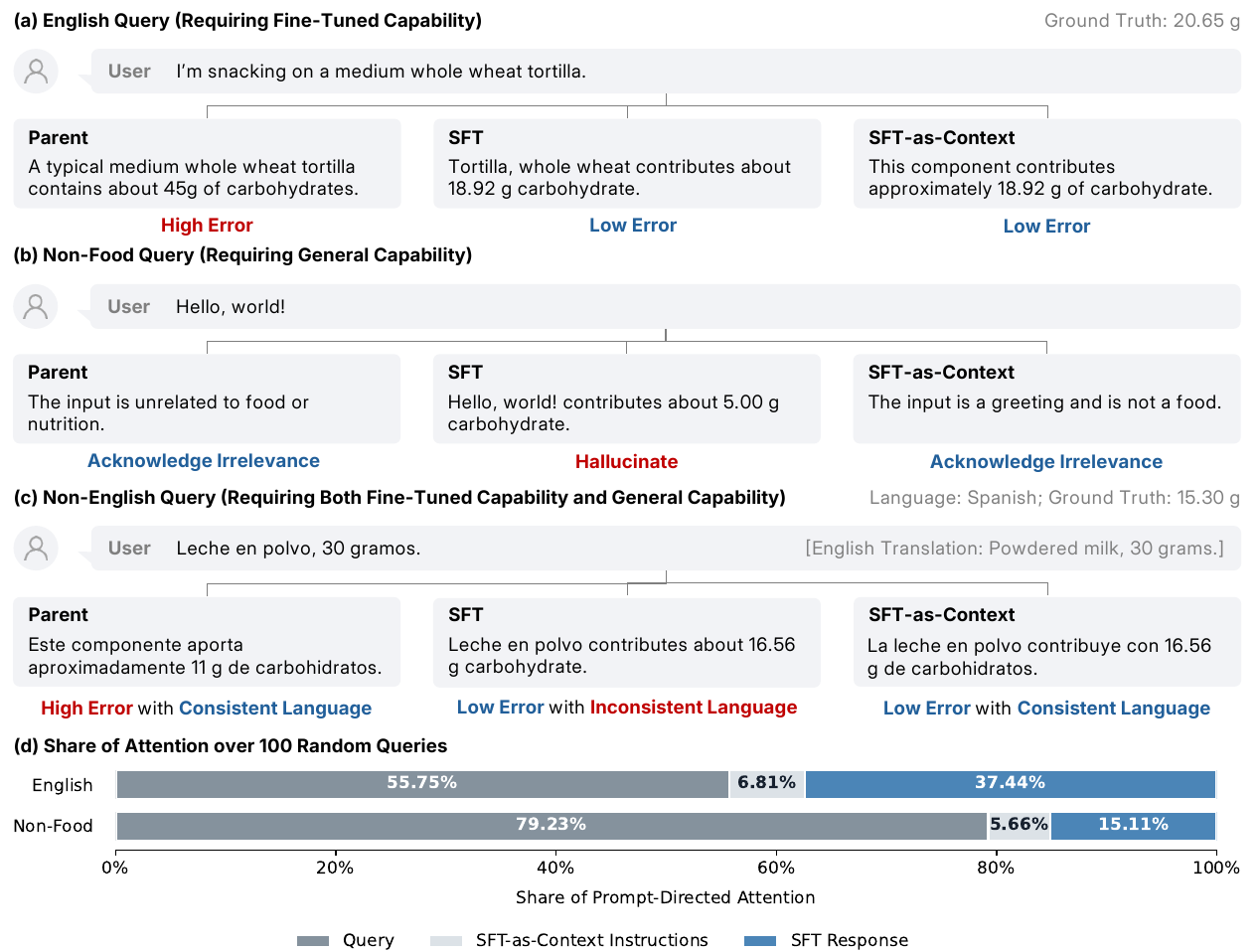}
    \caption{\textbf{SFT-as-context recovers both fine-tuned and general capabilities.} The task is to predict the nutritional values (carbohydrates, fat, protein, and energy) from a meal description, in the same language as the given query. Only the carbohydrate value is shown here as an example. Subfigure (a) shows that an SFT model trained on English data excels in fine-tuned capability. However, subfigures (b) and (c) show that it fails at acknowledging non-food queries and responding in the same language as the query. When a query requires both fine-tuned nutrition estimation capability and general multilingual capability, only SFT-as-context satisfies both requirements. Subfigure (d) shows that the parent model attends more to useful English SFT responses than to hallucinated SFT responses to non-food queries, suggesting that selective attention helps the parent model use the SFT response through in-context learning. We discuss details of the attention analysis in \Cref{sec:attention-analysis}.}
    \label{fig:capability_demonstration}
\end{figure}

Instead, we propose a \textit{training-free} method, SFT-as-context, which leverages the in-context learning capability of language models by providing a fine-tuned model's response as context to its parent model and prompting the parent model to generate the final answer.
Despite its simplicity, SFT-as-context preserves fine-tuning gains while mitigating forgetting across all \textbf{19 evaluated parent-fine-tuned model pairs and 11 benchmarks}, spanning mathematical reasoning, coding, nutrition estimation, and general capabilities.
To quantify performance recovery, we define the recovery rate for each capability as the fraction of the performance gap between the parent and SFT models that SFT-as-context recovers (Equation \ref{eqn:metric}). A recovery rate of 100\% means SFT-as-context matches the better model.
Using SFT checkpoints collected from Hugging Face, SFT-as-context achieves 95.0\% and 94.1\% recovery rates of the fine-tuning gains on AIME 2024 and LiveCodeBench~\citep{ICLR2025_94074dd5}, respectively. 
At the same time, it achieves an average recovery rate of 94.2\% for general capabilities across instruction-following, multilingual tasks, question answering, and planning.
Additionally, SFT-as-context combines fine-tuned and general capabilities within the same response, going beyond what either model can achieve alone. 
For example, on NutriBench-Non-English~\citep{dhaliwal2025nutribench}, as shown in \Cref{fig:capability_demonstration}(c), SFT-as-context  achieves recovery rates of 84.6\% for nutrition estimation and 95.2\% for language consistency.

More importantly, SFT-as-context requires only standard text input-output interfaces, making it applicable to \textit{closed-source} models without access to their parameters. 
This flexibility allows responses from a small open-source SFT model to serve as context for a stronger closed-source model, even when the latter is not the parent of the SFT model. 
Notably, this holds for a strong proprietary model: providing responses from a fine-tuned Qwen3-4B as context to Gemini 3.5 Flash reduces macro MAE from 17.44 to 14.08 on NutriBench-English, outperforming either model alone.

Going deeper, we study how SFT-as-context works from two perspectives. First, we use a Bayesian framework for in-context learning~\citep{xie2022an, hu2024unveiling} to provide theoretical guarantees that bound the differences in error between parent models, SFT models, and SFT-as-context. Second, we visualize the attention weights that the parent model assigns to the SFT output in the prompt. We find that the parent model selectively attends to the SFT output based on its effectiveness. For example, as shown in \Cref{fig:capability_demonstration}(d), the parent model allocates substantially more attention to the SFT response for nutrition queries than for unrelated queries (37.44\% versus 15.11\%). 
Taken together, these analyses suggest a single underlying principle of SFT-as-context: the parent model leverages in-context learning to selectively acquire fine-tuned capabilities from the SFT responses while preserving its own general capabilities.

\section{SFT Models Forget General Capabilities}

In this section, we describe our experimental setup and then present the results showing that SFT improves fine-tuned capabilities while degrading general capabilities.

\subsection{Experiment Setup}
We evaluate models fine-tuned for three domains: mathematical reasoning, coding, and nutrition estimation.
We refer to the capabilities that SFT improves as \textit{fine-tuned capabilities}, and the capabilities that the parent model already possessed as \textit{general capabilities}.
For example, in NutriBench, the fine-tuned capability is nutrition estimation, and general capabilities include multilingual understanding and instruction-following.

We evaluate 19 parent-SFT pairs, including 8 SFT models fine-tuned on English mathematical reasoning traces and 8 fine-tuned on coding tasks, all of which are popular models from Hugging Face (Appendix~\ref{sec:list-of-sft-models}). We also train 3 SFT models for nutrition estimation on NutriBench (Appendix~\ref{sec:nutrition-estimation-model-training}).

We first evaluate fine-tuned capabilities and general capabilities separately. 
For fine-tuned capabilities, we evaluate them on held-out data from the same distribution as the SFT training data: AIME 2024 for mathematics, LiveCodeBench~\citep{ICLR2025_94074dd5} for coding, and NutriBench-English, the English subset of NutriBench~\citep{dhaliwal2025nutribench}, for nutrition estimation. To evaluate general capabilities, we use IFEval~\citep{zhou2023instruction} and MGSM~\citep{shi2023language} for math models, and IFEval, SQuAD2.0~\citep{rajpurkar-etal-2018-know}, FQuAD2.0~\citep{heinrich-etal-2022-fquad2}, CoQA~\citep{reddy2019coqa}, and ACPBench~\citep{kokel2025acpbench} for coding models. 
For nutrition models, we use IFEval as a proxy for non-food queries, and we refer to this as NutriBench-Non-Food.

In addition, we evaluate tasks that require both fine-tuned and general capabilities in the same query:
\begin{itemize}[noitemsep, topsep=0pt, partopsep=0pt, leftmargin=*]
    \item \textbf{MathIF}~\citep{fu-etal-2026-scaling}: Queries combine mathematical problem solving with formatting requirements, such as writing the response in lowercase or using at least two highlighted sections.

    \item \textbf{LiveCodeBenchIF}: We construct this benchmark by appending a formatting instruction to each question from LiveCodeBench~\citep{ICLR2025_94074dd5}, which asks for exactly 3 comment lines.

    \item \textbf{NutriBench-Non-English}: The query is a non-English meal description (\Cref{fig:capability_demonstration}), and an English prompt template asks for an answer in the same language as the query.
\end{itemize}
We provide details of all benchmarks and evaluation metrics in Appendix~\ref{sec:experiment-details}.

\subsection{SFT Causes Forgetting}
\label{sec:sft-causes-forgetting}

\begin{table*}[t]
  \centering
  \caption{\textbf{SFT-as-context successfully mitigates the forgetting of general capabilities and maintains high fine-tuned capabilities compared to SFT across all benchmarks.} Higher
  is better except for NutriBench macro MAE, where lower is better. For IFEval, we report prompt-level strict accuracy (PSA).
  $^\dagger$ denotes joint-capability settings, where the same queries require both fine-tuned and general capabilities.
  The performance recovery rate ($R$) is defined in Equation~\ref{eqn:metric} and reported as a percentage. The metric results are averaged over 8 parent-SFT model pairs for mathematics, 8 for coding, and 3 for nutrition. Best results are bolded and second-best results are underlined.}
  \label{tab:combined-domain-results}
  \small
  \setlength{\tabcolsep}{6pt}
  \begin{tabular*}{\textwidth}{@{\extracolsep{\fill}}llcccc@{}}
    \toprule
    Benchmark & Metric & Parent & SFT & SFT-as-Context & $R$ ($\uparrow$) \\

    \midrule
    \rowcolor{black!10}[0pt][0pt]
    \multicolumn{6}{@{}c@{}}{\textbf{Mathematics}} \\

    \multicolumn{6}{@{}l}{\textbf{Fine-Tuned Capabilities}} \\

    AIME 2024
      & Accuracy
      & 11.3 & \textbf{56.0} & \underline{53.8} & 95.0 \\
    MathIF$^\dagger$
      & Math Correctness
      & 40.6 & \textbf{65.0} & \underline{64.0} & 95.9 \\

    \multicolumn{6}{@{}l}{\textbf{General Capabilities}} \\

    IFEval
      & PSA
      & \textbf{71.3} & 52.7 & \underline{69.8} & 91.6 \\

    MathIF$^\dagger$
      & Hard IF
      & \textbf{50.1} & 21.3 & \underline{45.8} & 85.1 \\

    MGSM
      & Answer-Prefix Accuracy
      & \textbf{90.6} & 50.2 & \underline{87.5} & 92.2 \\

    \midrule
    \rowcolor{black!10}[0pt][0pt]
    \multicolumn{6}{@{}c@{}}{\textbf{Coding}} \\

    \multicolumn{6}{@{}l}{\textbf{Fine-Tuned Capabilities}} \\

    LiveCodeBench
      & Pass@1
      & 20.1 & \textbf{55.5} & \underline{53.4} & 94.1 \\
    LiveCodeBenchIF$^\dagger$
      & Pass@1
      & 17.3 & \textbf{53.8} & \underline{51.2} & 92.9 \\

    \multicolumn{6}{@{}l}{\textbf{General Capabilities}} \\

    IFEval
      & PSA
      & \textbf{75.0} & 45.1 & \underline{74.8} & 99.4 \\

    LiveCodeBenchIF$^\dagger$
      & Exactly 3 Comment Lines
      & \textbf{31.5} & 19.1 & \underline{31.1} & 96.6 \\

    SQuAD2.0
      & Correctness
      & \textbf{86.1} & 62.5 & \underline{83.3} & 88.3 \\
    FQuAD2.0
      & Correctness
      & \textbf{81.0} & 66.2 & \underline{78.2} & 81.3 \\
    CoQA
      & Correctness
      & \textbf{95.9} & 62.2 & \underline{93.1} & 91.7 \\
    ACPBench
      & Overall Accuracy
      & \underline{77.0} & 65.9 & \textbf{78.8} & 116.2 \\

    \midrule
    \rowcolor{black!10}[0pt][0pt]
    \multicolumn{6}{@{}c@{}}{\textbf{Nutrition}} \\

    \multicolumn{6}{@{}l}{\textbf{Fine-Tuned Capabilities}} \\

    NutriBench-English
      & Macro MAE $\downarrow$
      & 35.3 & \textbf{16.4} & \underline{18.4} & 89.2 \\
    NutriBench-Non-English$^\dagger$
      & Macro MAE $\downarrow$
      & 85.6 & \textbf{59.6} & \underline{63.6} & 84.6 \\

    \multicolumn{6}{@{}l}{\textbf{General Capabilities}} \\

    NutriBench-Non-English$^\dagger$
      & Language Consistency
      & \textbf{77.3} & 1.9 & \underline{73.7} & 95.2 \\
    NutriBench-Non-Food
      & Acknowledge Non-Food
      & \textbf{98.8} & 22.6 & \underline{98.0} & 98.9 \\
      
    \bottomrule
  \end{tabular*}
\end{table*}

Consistent with prior observations of forgetting after SFT~\citep{chen2026retaining,huan2026does}, we find that the improvement of fine-tuned capabilities comes at the expense of general capabilities.
As shown in \Cref{tab:combined-domain-results}, SFT causes substantial forgetting of general capabilities, reducing general capability scores by 33.2 percentage points on average.
For example, among the three nutrition SFT models, Qwen3-8B shows the largest drop in acknowledging non-food queries, from 99.6\% to 10.7\%, catastrophically forgetting the instruction-following capability of its parent model.

This forgetting phenomenon becomes more problematic when a single query requires both fine-tuned and general capabilities, as the loss of either can cause the entire response to fail.
For example, each query in MathIF requires both the fine-tuned capability of solving the math problem correctly and the general capability of following additional response constraints, such as using only lowercase letters.
The SFT model significantly improves math correctness, from 40.6\% to 65.0\%, but fails to follow the format constraints. As a result, the hard instruction-following (Hard IF) score falls from 50.1\% to 21.3\%.
In one example of a polynomial-remainder problem (Appendix~\ref{sec:more-examples-of-forgetting-and-mitigation}), the SFT model answers correctly but uses the uppercase symbol $R$, violating the lowercase requirement. 
Similar problems occur on LiveCodeBenchIF and NutriBench-Non-English.

Taken together, fine-tuning substantially improves performance on the task the model is fine-tuned on, but at the expense of forgetting general capabilities that the parent model already possesses, e.g., instruction-following and multilingual understanding.

\begin{figure}[t]
    \centering
    \includegraphics[width=\linewidth]{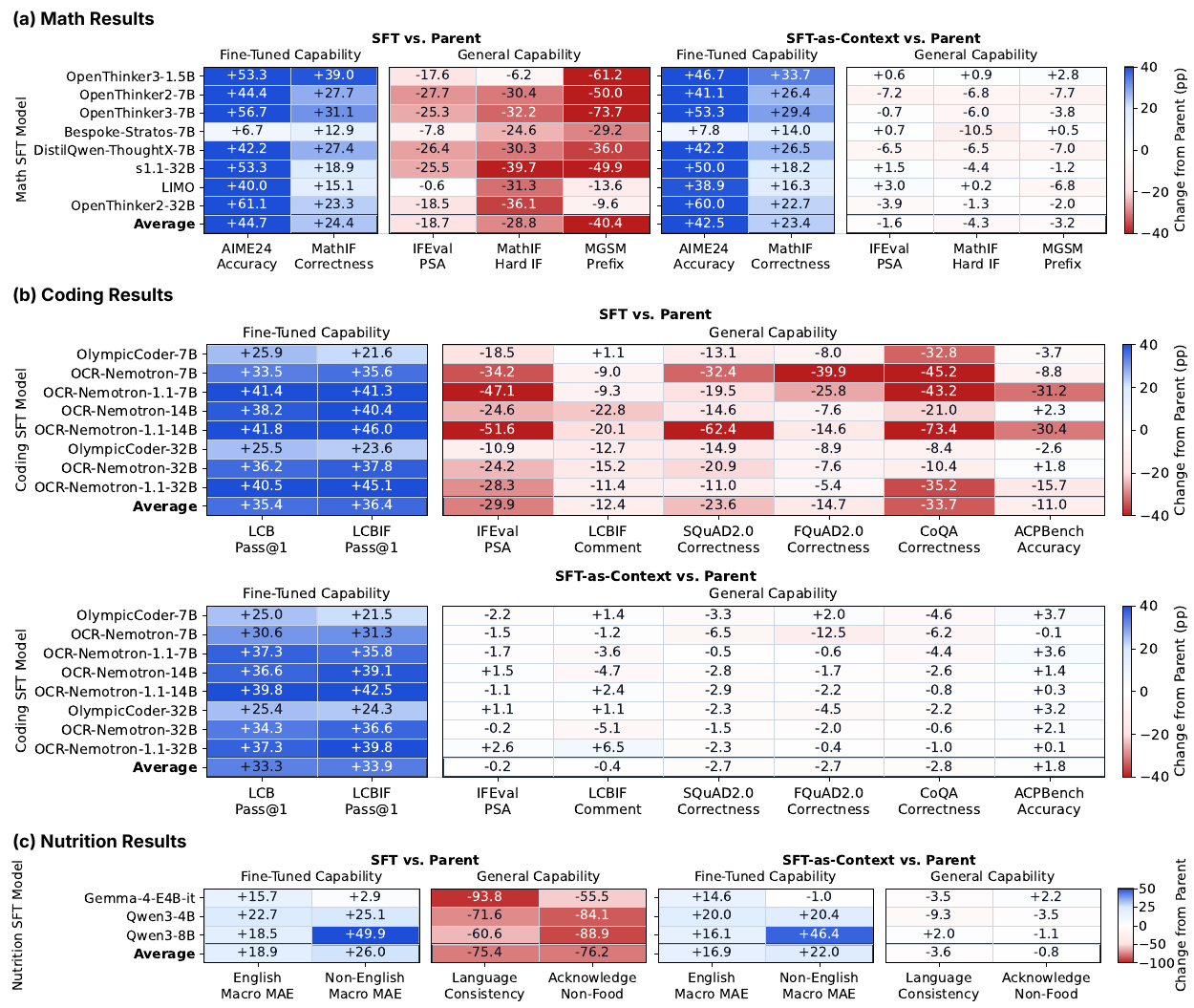}
    \caption{\textbf{Across individual model pairs, SFT-as-context maintains fine-tuned capabilities and mitigates forgetting of general capabilities.} In each heatmap, the first group of columns shows fine-tuned capabilities, and the second group of columns shows general capabilities. Each domain has two heatmaps: one showing the performance difference between the SFT model and the parent model (SFT vs. Parent), and the other showing the difference between SFT-as-context and the parent model (SFT-as-context vs. Parent). Darker blue indicates a larger improvement over the parent model, whereas darker red indicates more severe forgetting relative to the parent model. Color ranges are clamped to prevent large outliers from dominating the colors. For NutriBench macro MAE, the values are reversed to consistently show greater improvement in darker blue.}
    \label{fig:breakdown-heatmaps}
\end{figure}

\section{SFT-as-Context Successfully Mitigates Forgetting}
\label{sec:sft-as-context-successfully-mitigates-forgetting}

\paragraph{Method.}
\label{sec:sft-as-context-method}

To mitigate forgetting in SFT models, we propose SFT-as-context, which leverages the in-context learning ability of the parent model by providing the SFT model's response as context when generating the final response. Specifically, during inference, SFT-as-context answers each query through two sequential passes. In the first pass, the SFT model generates a response to the query. In the second pass, we construct the parent model's prompt from three parts: the original query, the SFT response from the first pass, and the SFT-as-context instructions. These instructions ask the parent model to treat the SFT response as a helpful answer unless there are concrete errors.

Formally, let \(x\) denote the input query, \(f_\sft\) the SFT model, and \(f_\parent\) its parent model. The SFT model first generates:
$
y_\sft = f_\sft(x).
$
We then construct the input to the parent model as \([x, y_\sft, I]\), where \(I\) denotes the SFT-as-context instructions. The SFT-as-context instructions are fixed for each domain (math, coding, or nutrition), agnostic to models and different fine-tuned or general capability benchmarks (Appendix~\ref{sec:sft-as-context-prompts}). The parent model generates the final response as: 
\begin{equation}
    y_\sftc = f_\parent(x, y_\sft, I).
\end{equation}
Thus, SFT-as-context provides the SFT model's response as context to the parent model, which then generates the final answer. Crucially, SFT-as-context is a novel, training-free approach to mitigating forgetting that requires neither access to model weights nor the fine-tuning process. It operates entirely through text input-output interfaces, making it applicable even to closed-source models.

\paragraph{Evaluation Metric.} We define the \textit{performance recovery rate} to quantify how much of the performance gap between the parent and SFT models is recovered by SFT-as-context for a given capability. Let $s_\parent$, $s_\sft$, and $s_\sftc$ denote the parent, SFT, and SFT-as-context scores (e.g., accuracy or MAE), respectively. For metrics where higher is better, the recovery rate $R$ is defined as
\begin{equation}\label{eqn:metric}
R = [s_\sftc-\min(s_\parent,s_\sft)] / |s_\parent-s_\sft| \times 100\%.
\end{equation}
A recovery rate of $0\%$ indicates that SFT-as-context matches the worse-performing model, while $100\%$ indicates that it matches the better-performing model. The recovery rate can exceed $100\%$ when SFT-as-context outperforms both models. For metrics where lower is better, such as MAE, the recovery rate is alternatively defined as $100\%-R$.

\paragraph{SFT-as-Context Combines the Strengths of Parent and SFT Models.} Despite its simplicity, SFT-as-context achieves performance close to the parent model on general capabilities while remaining close to the SFT model on fine-tuned capabilities, effectively combining the strengths of SFT and parent models, as shown in \Cref{tab:combined-domain-results}.  Across 19 parent-fine-tuned model pairs and 11 benchmarks, SFT-as-context preserves 92.0\% of the fine-tuning gains on average, while recovering 94.2\% of the general-capability performance gap between parent and SFT models.
More importantly, SFT-as-context can combine fine-tuned and general capabilities within the same response, which neither the parent nor the SFT model can achieve alone. We evaluate this setting on MathIF, LiveCodeBenchIF, and NutriBench-Non-English, where each query requires both fine-tuned and general capabilities, and we report the performance on both for each dataset in \Cref{tab:combined-domain-results}.
For the same MathIF example discussed in \Cref{sec:sft-causes-forgetting}, the SFT model provides a correct answer but uses uppercase letters, whereas the SFT-as-context response preserves the correct answer while satisfying the lowercase formatting constraint (Appendix~\ref{sec:mathif-and-livecodebenchif-general-failures}).

\Cref{fig:breakdown-heatmaps} shows that these trends extend across individual model pairs, including those with severe forgetting.
For OCR-Nemotron-1.1-14B, SFT reduces CoQA performance by 73.4 percentage points relative to the parent model, indicating severe forgetting in conversational question answering.
SFT-as-context brings this performance back to near the parent model's level, with a gap of only 0.8 percentage points, while preserving most of the gains on LiveCodeBench (39.8 out of 41.8 percentage points).
This highlights that SFT-as-context can mitigate severe forgetting while preserving substantial fine-tuning gains.

To understand the remaining performance gap, we zoom into MathIF, which shows the largest gap on the Hard IF metric (\Cref{fig:breakdown-heatmaps}). The dominant failure mode is that the small parent model, Qwen2.5-7B-Instruct (the shared parent of all 7B math SFT models), often returns only a boxed answer and thus violates formatting requirements of using specific words or sections (Appendix~\ref{sec:sft-as-context-failure-mode}). This is effectively mitigated by a larger parent: the 32B parent exhibits this failure far less often, yielding a smaller remaining gap for the 32B models than for the 7B ones (an average gap of $-1.8$ versus $-7.5$).

\section{Understanding the Mechanisms of SFT-as-Context}
\label{sec:mechansims}

To understand why SFT-as-context can mitigate forgetting while preserving the fine-tuned capabilities, we first analyze how LLMs combine capabilities from a theoretical perspective (\Cref{sec:theoretical-explanation}). Then, by analyzing attention patterns, we empirically show that in SFT-as-context, a parent LLM attends more to useful SFT responses and less to hallucinated context, suggesting a possible mechanism for recovering general capabilities while preserving fine-tuned capabilities (\Cref{sec:attention-analysis}).

\subsection{Theoretical Explanation}
\label{sec:theoretical-explanation}

We explain the effectiveness of SFT-as-context using a Bayesian framework for in-context learning \citep{xie2022an, hu2024unveiling}, building on earlier Bayesian models \citep{baum1966statistical, blei2003latent}.
Following \citet{xie2022an}, we model response generation as conditioned on a domain variable $\theta$, with output distribution $\Pr(Y\mid x,\theta)$.
Each such variable $\theta$ corresponds to a domain such as nutrition estimation, math, instruction-following, and multilinguality.

The parent model is trained using a large dataset across many different domains and therefore has many capabilities. 
Let $\Theta_{\parent}$ denote the parent model's domain set.
We assume that the SFT model is fine-tuned on a single domain $\theta_{\sft}$ and denote its query set by $D_{\sft}$.
Our analysis explains how SFT-as-context combines SFT expertise with the parent model's general capabilities.

For a query $x$, let $\Pr_{\parent}(\cdot\mid x)$ and $\Pr_{\sft}(\cdot\mid x)$ denote the parent and SFT response distributions, respectively, and define
$\Pr_{\sftc}(\cdot\mid x):=\Pr_{\parent}(\cdot\mid x,y_{\sft},I)$,
where $y_{\sft}$ is the SFT response and $I$ denotes the SFT-as-context instructions.
All variables are discrete.
For each domain, assume an optimal domain variable $\theta^*$ inducing the optimal response law
$\Pr_*(\cdot\mid x):=\Pr(\cdot\mid x,\theta^*)$.
For $m\in\{\parent,\sft,\sftc\}$, define the corresponding errors $\err_m
    :=\operatorname{KL}\!\left(
        \Pr_*(\cdot\mid x)\,\|\,\Pr_m(\cdot\mid x)
    \right).$
We model the parent and SFT-as-context distributions as mixtures:
\begin{align*}
    \Pr_{\parent}(y\mid x)
    &=
    \sum_{\theta\in\Theta_{\parent}}
    \Pr_{\parent}(y\mid x,\theta)
    \Pr_{\parent}(\theta\mid x), \\
    \Pr_{\sftc}(y\mid x)
    &=
    \sum_{\theta\in\Theta_{\parent}}
    \Pr_{\parent}(y\mid x,\theta)
    \Pr_{\sftc}(\theta\mid x,y_{\sft},I).
\end{align*}
Thus, conditioning on the SFT response and instructions changes
the domain posterior while preserving the per-domain conditional
output laws.
We refer to $\Pr_{\cdot}(\theta | x)$ as the posterior probability, which under the Bayesian framework is the probability of the domain variable conditioned on the query.
Intuitively, we model the LLM's generation process as predicting a domain based on the query with some probability and then generating the response conditioned on the domain variable \citep{xie2022an}.
We make the following assumptions for all $x$ where all logarithms are natural, and the total variation is defined as $\operatorname{TV}(P,Q) := \frac{1}{2}\sum_y |P(y)-Q(y)|$.
We present assumptions intuitively followed by the formalism.
\begin{enumerate}[leftmargin=*]
    \item The posterior probability of the SFT domain variable is low for the parent model. Formally, $ 0 < a_x := \Pr_{\parent}(\theta_{\sft} \mid x) \leq \epsilon$ for the input $x$ that belongs to the SFT domain, where $\theta_{\sft}$ is the optimal domain variable for the SFT domain.
    
    \item The posterior probability of the SFT domain variable given the SFT response is $\epsilon'$-close to 1. Formally, under the parent model, the SFT response \(y_{\sft}\) satisfies $\Pr_{\sftc}\!\left(\theta_{\sft}\mid x,\prompt,y_{\sft}\right)\ge 1-\epsilon'$, meaning that the posterior probability assigned to the optimal domain variable \(\theta_{\sft}\) is within \(\epsilon'\) of 1.
    
    \item The optimal response distribution is separated from that of the parent model by some margin \citep{xie2022an}. Let $Q_{\parent}$ denote the parent model's response distribution
conditioned on a non-SFT domain:
$
Q_{\parent}(y\mid x)
:= \Pr_{\parent}(y\mid x,\theta\neq\theta_{\sft}).
$ There exists $\epsilon'' > 0$ such that, for every $x \in D_{\sft}$: $
    \operatorname{TV}\!\left(
        \Pr_*(\cdot \mid x),Q_{\parent}(\cdot \mid x)
    \right)
    \geq \epsilon''$.

    \item The SFT model is an expert on the one domain it was fine-tuned on. Formally, the SFT domain variable $\theta_{\sft}$ is such that
$
    \Pr_{\sft}(y \mid \theta_{\sft},x)
    = \Pr_*(y \mid x)
$.
Furthermore, there exists $0 \leq \eta < 1$ such that,
for every $x \in D_{\sft}$, $c_x
    := \Pr_{\sft}(\theta_{\sft} \mid x)
    \geq 1-\eta.$

    \item The SFT-as-context domain posterior retains a minimum fraction
of the corresponding parent posterior. Formally, there exists
$0 \leq \rho_o < 1$ such that, for every out-of-domain query
and realized context under consideration,
\[
    \Pr\nolimits_{\sftc}(\theta \mid x,\prompt,y_{\sft})
    \geq
    (1-\rho_o)\Pr\nolimits_{\parent}(\theta \mid x),
    \qquad \forall \theta \in \Theta_{\parent}.
\]
\vspace{-0.5em}
\end{enumerate}

Under these assumptions, we have the following results for $x$ that is in domain for the SFT dataset:
\begin{theorem}[Error guarantees for in-domain queries]
\label{thm:sftc_guarantees}
Suppose that the shared conditional-output model and
in-domain realizability condition hold (Appendix~\ref{sec:sft_in_domain_proof}).
Let Assumptions~1--4 hold with
$0<\epsilon<1$, $0\leq\epsilon'<1$,
$0<\epsilon''\leq 1$, and $0\leq\eta<1$.
For every in-domain query $x\in D_{\sft}$ and
the realized context $(y_{\sft},I)$ satisfying these assumptions,
the following are true:
    \begin{equation}
    \label{eq:parent_improvement}
    \textbf{Improvement over the parent: }
    \err_{\parent}-\err_{\sftc}
        \geq
        2(1-\epsilon)^2(\epsilon'')^2
        + \log(1-\epsilon').
    \end{equation}
    \begin{equation}
    \textbf{Closeness to SFT: }
    \label{eq:sft_closeness}
        \log(1-\epsilon')
        \leq
        \err_{\sft}-\err_{\sftc}
        \leq
        -\log(1-\eta).
    \end{equation}
    \vspace{-2em}
\end{theorem}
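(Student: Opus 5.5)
Throughout, the "shared conditional-output model" is taken to mean that $\Pr_{\parent}(\cdot\mid x,\theta_{\sft})$ equals the SFT expert law $\Pr_{\sft}(\cdot\mid x,\theta_{\sft})=\Pr_*(\cdot\mid x)$ (Assumption~4). The plan is to write every model as a two-component mixture of $\Pr_*(\cdot\mid x)$ and an ``off-domain'' remainder, and then compare the KL errors through elementary mixture bounds. Set $b_x:=\Pr_{\sftc}(\theta_{\sft}\mid x,y_{\sft},I)\ge 1-\epsilon'$ and $c_x=\Pr_{\sft}(\theta_{\sft}\mid x)\ge 1-\eta$. By realizability, the SFT model is itself such a mixture with weight $c_x$ on $\Pr_*$. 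This gives
\[
\Pr_{\sftc}(\cdot\mid x)=b_x\Pr_*(\cdot\mid x)+(1-b_x)Q_{\sftc}(\cdot\mid x),\qquad
\Pr_{\sft}(\cdot\mid x)=c_x\Pr_*(\cdot\mid x)+(1-c_x)Q_{\sft}(\cdot\mid x),
\]
\[
\Pr_{\parent}(\cdot\mid x)=a_x\Pr_*(\cdot\mid x)+(1-a_x)Q_{\parent}(\cdot\mid x),
\]
where each $Q$ is a normalized law over the non-SFT domains.

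\textbf{Step 1: mixture upper bound.} For any mixture $M=w\Pr_*+(1-w)Q$ we have $M\ge w\Pr_*$ pointwise. Hence
\[
\operatorname{KL}(\Pr_*\|M)=\sum_y \Pr_*(y)\log\frac{\Pr_*(y)}{M(y)}\le -\log w .
\]
This gives $0\le\err_{\sftc}\le-\log(1-\epsilon')$ and $0\le\err_{\sft}\le-\log(1-\eta)$. The two-sided closeness bound \eqref{eq:sft_closeness} then follows immediately:
\[
\err_{\sft}-\err_{\sftc}\ge 0-(-\log(1-\epsilon'))=\log(1-\epsilon'),\qquad
\err_{\sft}-\err_{\sftc}\le -\log(1-\eta)-0 .
\]

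\textbf{Step 2: lower bound on the parent error.} Apply Pinsker's inequality, $\err_{\parent}\ge 2\operatorname{TV}(\Pr_*,\Pr_{\parent})^2$. Because $\Pr_{\parent}-\Pr_*=(1-a_x)(Q_{\parent}-\Pr_*)$, total variation scales exactly:
\[
\operatorname{TV}(\Pr_*,\Pr_{\parent})=(1-a_x)\operatorname{TV}(\Pr_*,Q_{\parent})\ge(1-\epsilon)\epsilon'',
\]
using Assumptions~1 and~3. Therefore $\err_{\parent}\ge 2(1-\epsilon)^2(\epsilon'')^2$. Subtracting the Step~1 bound $\err_{\sftc}\le-\log(1-\epsilon')$ yields \eqref{eq:parent_improvement}.

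\textbf{Main obstacle.} The analytic content is light; the real work is justifying the mixture decompositions. One must check that the non-SFT components really renormalize to laws $Q$ with the stated weights. One must also confirm that the shared conditional-output model makes the $\theta_{\sft}$-component of both the parent and SFT-as-context equal to $\Pr_*$, which is where Assumption~4 is transferred to the parent. Finally, supports need care so that all KL terms are finite. The edge case $\eta=0$ or $\epsilon'=0$ is harmless, since the logs vanish.
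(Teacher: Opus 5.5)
Your proposal is correct and follows the paper's proof essentially step for step. The paper likewise bounds $\err_{\sftc}\le-\log b_x\le-\log(1-\epsilon')$ and $\err_{\sft}\le-\log c_x\le-\log(1-\eta)$ by keeping only the $\theta_{\sft}$ term of each mixture, uses nonnegativity of KL to get the closeness bound, and lower-bounds $\err_{\parent}$ via the exact scaling $\operatorname{TV}(\Pr_*,\Pr_{\parent})=(1-a_x)\operatorname{TV}(\Pr_*,Q_{\parent})\ge(1-\epsilon)\epsilon''$ followed by Pinsker. The one difference is bookkeeping: you obtain $\Pr_{\parent}(\cdot\mid\theta_{\sft},x)=\Pr_*(\cdot\mid x)$ by transferring Assumption~4 to the parent, whereas in the paper this is simply the stated in-domain realizability hypothesis, and the ``shared conditional-output model'' means that SFT-as-context reuses the parent's per-domain laws $\Pr_{\parent}(\cdot\mid x,\theta)$.
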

The theorem identifies sufficient conditions under which
SFT-as-context improves in-domain performance compared to the parent model while
limiting degradation compared to the SFT model.
Intuitively, Equation \ref{eq:parent_improvement} shows that the error of the parent for in-domain queries is greater than the error of SFT-as-context under the Bayesian model of text generation.
Equation \ref{eq:sft_closeness} means that the errors of SFT and SFT-as-context are close to one another.
The proof can be found in Appendix~\ref{sec:theoretical-results-and-proofs}.
While these results support how SFT-as-context maintains fine-tuned capabilities (\Cref{fig:breakdown-heatmaps}), we also show how under this Bayesian framework it maintains general capabilities with the following theorem.

\begin{theorem}[Error guarantees for out-of-domain queries]
\label{thm:out-of-domain}
Suppose that the shared conditional-output model and
out-of-domain realizability condition hold (Appendix~\ref{sec:sft_out_of_domain_proof}).
For any \textbf{out-of-domain query} $x\in D_o$, where
$\theta_o\in\Theta_{\parent}$ and
$\theta_o\neq\theta_{\sft}$, suppose that
Assumption 5 holds with $0\leq\rho_o<1$ and that
$\err_{\parent}<\infty$, with errors measured
relative to the optimal output law on $D_o$.
Then, for every realized context satisfying this assumption: 
$
    \err_{\sftc}-\err_{\parent}
    \leq -\log(1-\rho_o).
$
\vspace{-0.5em}
\end{theorem}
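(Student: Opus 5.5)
The plan is to compare the two mixtures pointwise and then pass the bound through the KL divergence. Both the parent and SFT-as-context distributions are mixtures over $\Theta_{\parent}$ with the \emph{same} per-domain output laws $\Pr_{\parent}(y\mid x,\theta)$; this is the shared conditional-output model. Only the mixing weights differ. Assumption~5 says the SFT-as-context weights dominate $(1-\rho_o)$ times the parent weights, for every $\theta$.

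The first step is the pointwise inequality. For every response $y$,
\[
\Pr\nolimits_{\sftc}(y\mid x)=\sum_{\theta\in\Theta_{\parent}}\Pr\nolimits_{\parent}(y\mid x,\theta)\Pr\nolimits_{\sftc}(\theta\mid x,\prompt,y_{\sft})\ \ge\ (1-\rho_o)\sum_{\theta\in\Theta_{\parent}}\Pr\nolimits_{\parent}(y\mid x,\theta)\Pr\nolimits_{\parent}(\theta\mid x).
\]
The right-hand side equals $(1-\rho_o)\Pr_{\parent}(y\mid x)$. This uses only nonnegativity of the terms, so it holds even for countable sums.

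The second step is to write the difference of errors as a single expectation under $\Pr_*$:
\[
\err_{\sftc}-\err_{\parent}=\sum_y \Pr\nolimits_*(y\mid x)\log\frac{\Pr_{\parent}(y\mid x)}{\Pr_{\sftc}(y\mid x)}.
\]
Each log-ratio is at most $-\log(1-\rho_o)$ by the first step. Averaging against the probability distribution $\Pr_*(\cdot\mid x)$ then gives the claimed bound $\err_{\sftc}-\err_{\parent}\le-\log(1-\rho_o)$.

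The only delicate point, and the main obstacle, is justifying the subtraction of the two KL divergences, to rule out $\infty-\infty$ or conditionally convergent rearrangements. I would handle it as follows:
\begin{itemize}
\item The hypothesis $\err_{\parent}<\infty$ implies $\Pr_{\parent}(y\mid x)>0$ wherever $\Pr_*(y\mid x)>0$.
\item The first step then gives $\Pr_{\sftc}(y\mid x)\ge(1-\rho_o)\Pr_{\parent}(y\mid x)>0$ on the support of $\Pr_*$, so $\err_{\sftc}$ is a well-defined sum over the same support.
\item Write $\log\frac{\Pr_*}{\Pr_{\sftc}}=\log\frac{\Pr_*}{\Pr_{\parent}}+\log\frac{\Pr_{\parent}}{\Pr_{\sftc}}$. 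The second summand is bounded above by $-\log(1-\rho_o)$.
\item So the positive part of the integrand defining $\err_{\sftc}$ is dominated by that of $\err_{\parent}$ plus a constant. Hence $\err_{\sftc}\le\err_{\parent}-\log(1-\rho_o)<\infty$, and the termwise split is legitimate.
\end{itemize}

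The out-of-domain realizability condition is needed only to ensure that the optimal law $\Pr_*$ on $D_o$ is the reference measure in both errors. No property of $\theta_o$ beyond membership in $\Theta_{\parent}$ is used.
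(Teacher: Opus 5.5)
Your proposal is correct and follows essentially the same route as the paper: both derive the pointwise domination $\Pr_{\sftc}(y\mid x,\prompt,y_{\sft})\ge(1-\rho_o)\Pr_{\parent}(y\mid x)$ from Assumption~5 and the shared per-domain output laws, then pass it through the logarithm inside the expectation under $\Pr_*(\cdot\mid x)$ to obtain $\err_{\sftc}\le\err_{\parent}-\log(1-\rho_o)$. Your treatment of well-definedness is a more careful version of the paper's one-line remark that realizability together with $\Pr_{\parent}(\theta_o\mid x)>0$ makes both errors finite: you use $\err_{\parent}<\infty$ to get positivity on the support of $\Pr_*$ and to rule out $\infty-\infty$.
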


The inequality above shows that for out-of-domain queries the SFT-as-context method performs similarly to the parent model by upper bounding the difference in error.
We have thus shown the conditions under which the errors of SFT-as-context are bounded relative to the SFT model on in-domain queries and relative to the parent model on out-of-domain queries.
Our theoretical results bolster our empirical claims: SFT-as-context remains close to the SFT model on fine-tuned capabilities while staying close to the parent model on general capabilities (\Cref{tab:combined-domain-results} and \Cref{fig:breakdown-heatmaps}).

\subsection{Attention Analysis}
\label{sec:attention-analysis}

SFT-as-context places the SFT model's response in the parent model's context. However, these responses can provide useful task-specific information or contain misleading hallucinations, as illustrated in \Cref{fig:capability_demonstration}. How does the parent model use these responses in SFT-as-context?

We investigate how the parent model uses SFT responses by analyzing attention weights, which characterize how strongly it attends to different prompt tokens during generation~\citep{clark-etal-2019-bert}.
We compare two settings: nutrition queries from NutriBench-English, for which SFT responses provide useful context, and non-food queries from NutriBench-Non-Food, for which SFT responses may contain hallucinated nutrition estimates. The model pair is Gemma-4-E4B-it and its SFT version. We randomly sample 100 queries from each category and measure the attention allocated to the SFT responses, averaging over response tokens, layers, and attention heads (Appendix~\ref{sec:attention-visualization}).

\Cref{fig:capability_demonstration}(d) shows that the parent model attends more to useful SFT responses than hallucinated ones, with a 22.33 percentage point difference in the average share of attention allocated to the SFT responses.
This selective attention suggests that the parent model can use useful information from SFT responses through in-context learning while limiting the influence of irrelevant context, providing a possible explanation for how SFT-as-context mitigates forgetting while preserving fine-tuned capabilities.

\section{Discussion}

Here, we extend SFT-as-context to a strong proprietary model, showing that responses from open-source SFT models can further improve its performance with only text input-output access (\Cref{sec:cooperation-with-proprietary-model-for-even-stronger-capability}). We also provide ablation studies to show that the recovery of both fine-tuned and general capabilities does not arise solely from the SFT-as-context instructions (\Cref{sec:ablation-studies}).

\subsection{SFT-as-Context Further Improves Proprietary Models}
\label{sec:cooperation-with-proprietary-model-for-even-stronger-capability}

Beyond combining the strengths of the parent and SFT models, SFT-as-context can further improve fine-tuned capabilities when the final response is generated by a stronger model rather than the original parent model.
To be more specific, we fine-tune an open-source model such as Qwen3 and use its response as context for Gemini, which further improves Gemini's performance.
This flexibility is unique to SFT-as-context because it requires only standard text input-output access, without access to model weights or the fine-tuning process.

As shown in \Cref{tab:nutribench-macro-mae-results}, using responses from open-source SFT models as context consistently improves the performance of a strong proprietary model on NutriBench-English, achieving lower MAE than either model alone. This result also shows that the proprietary model does not merely copy the SFT response during inference. Instead, it can combine task-specific information from the SFT response with its own capabilities to produce a more accurate final answer.

\subsection{Ablation Studies}
\label{sec:ablation-studies}

To test whether the gains of SFT-as-context come simply from the SFT-as-context instructions ($I$) and an additional inference pass, we evaluate two control settings on NutriBench: the parent model using its own response as context and the SFT model using its own response as context.

As shown in \Cref{tab:nutribench-ablation}, the parent model with its own context preserves strong general capabilities, but its English and non-English macro MAE remain at 36.6 and 77.4, compared with 18.4 and 63.6 for SFT-as-context.
Conversely, the SFT model with its own context retains most of its nutrition estimation gains but fails to recover general capabilities. Language consistency and acknowledgment of non-food queries are only 1.3\% and 17.9\%, compared with 73.7\% and 98.0\% for SFT-as-context.
These results suggest that the gains of SFT-as-context cannot be explained solely by the SFT-as-context instructions or an additional inference pass.

\begin{table*}[t]
  \centering
  \caption{\textbf{A strong proprietary model can use context from open-source SFT models to further improve fine-tuned capabilities.} On NutriBench-English, using the context from Qwen3-4B achieves the best macro MAE. Best results are bolded and second-best results are underlined.}
  \label{tab:nutribench-macro-mae-results}
  
  \scriptsize
  \begin{tabular*}{\textwidth}{@{\extracolsep{\fill}}lccccccc@{}}
    \toprule
    & \multicolumn{1}{c}{Proprietary Model}
    & \multicolumn{3}{c}{Open-Source SFT Models}
    & \multicolumn{3}{c}{SFT-as-Context} \\
    \cmidrule(lr){2-2}\cmidrule(lr){3-5}\cmidrule(lr){6-8}
    Metric
      & Gemini 3.5 Flash
      & Gemma-4-E4B-it
      & Qwen3-4B
      & Qwen3-8B
      & Gemma-4-E4B-it
      & Qwen3-4B
      & Qwen3-8B \\
    \midrule
    Macro MAE $\downarrow$
      & 17.44
      & 16.64
      & 16.46
      & 16.05
      & \underline{14.13}
      & \textbf{14.08}
      & 14.41 \\
    \bottomrule
  \end{tabular*}
\end{table*}

\begin{table*}[t]
  \centering
  \caption{\textbf{SFT-as-context achieves the best balance between fine-tuned and general capabilities.} Parent with parent context fails to achieve low English and non-English macro MAE, and SFT with SFT context still fails on language consistency and acknowledging non-food queries. Best results are bolded and second-best results are underlined.}
  \label{tab:nutribench-ablation}
  \scriptsize
  \setlength{\tabcolsep}{4pt}
  \begin{tabular*}{\textwidth}{@{\extracolsep{\fill}}lcccc@{}}
    \toprule
    & \multicolumn{2}{c}{Fine-Tuned Capabilities}
    & \multicolumn{2}{c}{General Capabilities} \\
    \cmidrule(lr){2-3}\cmidrule(lr){4-5}
    Inference Setting
      & \shortstack{English MAE $\downarrow$}
      & \shortstack{Non-English MAE $\downarrow$}
      & \shortstack{Language Consistency (\%) $\uparrow$}
      & \shortstack{Acknowledge Non-Food (\%) $\uparrow$} \\
    \midrule
    Parent
      & 35.3 & 85.6 & \underline{77.3} & \underline{98.8} \\
    Parent with Parent Context
      & 36.6 & 77.4 & \textbf{83.4} & \textbf{99.7} \\
    SFT
      & \textbf{16.4} & \textbf{59.6} & 1.9 & 22.6 \\
    SFT with SFT Context
      & \underline{17.3} & 64.4 & 1.3 & 17.9 \\
    \midrule
    SFT-as-Context
      & 18.4 & \underline{63.6} & 73.7 & 98.0 \\
    \bottomrule
  \end{tabular*}
\end{table*}

\section{Related Work}

\paragraph{\textbf{Forgetting and Mitigation.}}
Prior work mitigates forgetting during fine-tuning through regularization toward pretrained behavior or parameters, as in Learning without Forgetting~\citep{li2017learning} and RecAdam~\citep{chen2020recall}, replay-based methods such as LAMOL~\citep{sun2020lamol}, constraints on parameter updates~\citep{lopez2017gradient,ICLR2026_4e447acb}, or the choice of adaptation algorithm~\citep{biderman2024lora,chen2026retaining}. Recent work finds that on-policy reinforcement learning preserves parent capabilities better than SFT~\citep{chen2026retaining}; we therefore focus on SFT-induced forgetting, where forgetting is typically more severe. In principle, the same inference-time approach could be applied to a model fine-tuned with reinforcement learning (RLFT) by providing its response as context to the parent model.

\paragraph{\textbf{Inference-Time Model Combination.}}
A line of work related to our setting explores combining pretrained and fine-tuned models at inference time. Emulated Fine-Tuning (EFT)~\citep{mitchell2024an} and Proxy-Tuning~\citep{liu2024tuning} combine model predictions through token-level probabilities or logits. They require access to output distributions during decoding and do not specifically target forgetting.

More recently, concurrent work CPR~\citep{ki2026cpr} mitigates forgetting by training a router to select between parent and fine-tuned models during decoding. In contrast, SFT-as-context requires only text input-output access, using the fine-tuned model's response as context for the parent model.

\section{Conclusion}

In this paper, we propose SFT-as-context, a training-free method that leverages the in-context learning ability of LLMs to mitigate the forgetting of general capabilities in SFT models while preserving their fine-tuned capabilities. Comprehensive experiments across domains, tasks, and model sizes demonstrate the effectiveness of SFT-as-context. 
We further analyze its underlying mechanism through a Bayesian view of in-context learning and attention analysis, showing that useful SFT responses can steer the parent model toward the fine-tuned domain while having less influence on unrelated queries, thereby preserving general capabilities. Since the same framework could naturally extend beyond SFT to other post-training methods, such as reinforcement learning, we leave this as a promising future direction to combine capabilities acquired through post-training with the broad capabilities of pretrained models.

\subsection*{AI use statement}

In this work, we used generative AI tools for generating synthetic datasets, implementing methods, assisting with translation, cleaning and reformatting datasets, supporting qualitative and thematic data analysis, and interpreting results. We have not used generative AI tools for helping develop theoretical models or conceptual frameworks, formulating mathematical claims, providing critical ingredients for proving mathematical claims, assisting in the writing of proofs, proposing or refining hypotheses, and designing or providing feedback on research methodology or experiments. Additionally, we used generative AI tools for creating or modifying scientific figures or images, creating or editing software code, summarizing or analyzing existing literature, editing a research paper to improve readability, and identifying relevant literature. 

We have reviewed all AI-assisted work. We manually verify synthetically generated meal descriptions for NutriBench. We compare with officially released metric values for public models to ensure LLM-generated code correctly implements dataset formatting, inference, and scoring. We use AI agents to speed up the identification of failure patterns in the forgetting of general capabilities, and we then manually conduct qualitative and thematic data analysis. We read through each of the suggested papers returned from AI-assisted literature review.

We take responsibility for the final content of this work, including text, claims or artifacts produced with the aid of generative AI.

\subsection*{Reproducibility statement}

Most of our results rely on publicly available model checkpoints and benchmarks, with the following three exceptions. First, we train our own models for nutrition estimation (Appendix~\ref{sec:nutrition-estimation-model-training}). Second, we add multilingual queries to the NutriBench dataset for testing fine-tuned and general capabilities (Appendix~\ref{sec:experiment-details-benchmarks}). Third, the proposed method requires manually written SFT-as-context instructions, which we include in Appendix~\ref{sec:sft-as-context-prompts}. Hence, we have provided sufficient details to allow full reproduction of the results. For theoretical results and proofs, we include more details in Appendix~\ref{sec:theoretical-results-and-proofs}.

\bibliography{iclr2027_conference}
\bibliographystyle{iclr2027_conference}

\newpage
\appendix

\section*{Appendix}
\setlength{\cftbeforesecskip}{0.5em}
\cftsetindents{section}{0em}{1.8em}
\cftsetindents{subsection}{1em}{2.5em}
\etoctoccontentsline{part}{Appendix}
{
  \etocsettocstyle{}{}
  \localtableofcontents
}

\section{Experiment Details}
\label{sec:experiment-details}

We provide more experiment details in this section.

\subsection{Benchmarks}
\label{sec:experiment-details-benchmarks}

In this work, we use the following benchmarks:

\textbf{American Invitational Mathematics Examination 2024} (AIME 2024) contains 30 math questions. Each question requires multi-step reasoning to solve. The answer to each question is an integer from 000 to 999.

\textbf{MathIF}~\citep{fu-etal-2026-scaling} is a benchmark for evaluating the instruction-following capability of reasoning models. It consists of 420 questions drawn from various math datasets, and each instruction contains Python-verifiable constraints from four categories (length, lexical, format, affix).

\textbf{IFEval}~\citep{zhou2023instruction} is a benchmark that evaluates the instruction-following capability of language models. The benchmark contains explicit, verifiable instructions, such as constraints on formatting, length, keywords, and response structure.

\textbf{MGSM}~\citep{shi2023language} is a multilingual math benchmark that contains translated versions of the 250 grade-school math problems from the GSM8K dataset~\citep{cobbe2021training} in 10 languages.

\textbf{LiveCodeBench}~\citep{ICLR2025_94074dd5} is a coding benchmark that uses continuously updated questions to evaluate coding models. In this work, we use the \texttt{v4\_v5} subset of LiveCodeBench, containing 268 questions. This choice is consistent with the version originally used by the OlympicCoder-7B and OlympicCoder-32B model series~\citep{penedo2025olympiccoder}.

\textbf{LiveCodeBenchIF} is a benchmark we constructed, in a similar fashion to MathIF, where we append a short formatting requirement to the end of the original questions from LiveCodeBench. To use a Python-verifiable constraint and to minimize the impact of the formatting instruction on the code structure, we only specify the number of comment lines to be 3 in the formatting instruction (Appendix~\ref{sec:task-prompts}). 

\textbf{SQuAD2.0}~\citep{rajpurkar-etal-2018-know} is a reading comprehension benchmark. The answer to every question is a segment of text from the corresponding reading passage, or the question might be unanswerable. We use a random sample of 1,187 (10\% of the total size) questions from the validation set. Of the 1,187 questions, 594 are answerable, and 593 are unanswerable.

\textbf{FQuAD2.0}~\citep{heinrich-etal-2022-fquad2} is a French-native reading comprehension benchmark, similar to SQuAD2.0. The benchmark contains 800 questions in its test set, with 400 answerable and 400 unanswerable questions.

\textbf{CoQA}~\citep{reddy2019coqa} is a benchmark designed to evaluate a model's ability to answer questions in the context of an ongoing conversation. We use the 500 questions in the public validation dataset. Following the implementation of the Language Model Evaluation Harness~\citep{eval-harness}, we evaluate the answer of the last dialogue turn.

\textbf{ACPBench}~\citep{kokel2025acpbench} is a benchmark designed to evaluate the reasoning capabilities of large language models across action, change, and planning. The benchmark consists of 7 reasoning tasks over 13 domains, requiring both single-step and multi-step reasoning capabilities. We evaluate on the officially released version where the answer is a boolean value.

We construct three evaluation subsets based on NutriBench~\citep{dhaliwal2025nutribench}.
\begin{itemize}[leftmargin=*]
    \item \textbf{NutriBench-English} is sampled directly from the original English NutriBench data. We randomly sample 2,000 English queries.
    
    \item \textbf{NutriBench-Non-English} is constructed using GPT-4o translations based on food items from the WHO source used in NutriBench. We sample 2,018 non-English queries to maximize language coverage, randomly sampling up to 200 queries per language and including all available queries for languages with fewer than 200 queries.
    
    \item \textbf{NutriBench-Non-Food} uses IFEval queries as a proxy for non-food inputs. To simulate deployment scenarios where users may provide non-food queries, we insert IFEval queries into the nutrition prediction template.
\end{itemize}

\subsection{List of SFT Models}
\label{sec:list-of-sft-models}

\Cref{tab:math-and-coding-model-pairs} shows the pairs of parent and SFT models we use for our experiments. For math and coding, the parent models are instruction-tuned versions from the Qwen2.5 family~\citep{yang2024qwen25} and Qwen2.5-Coder family~
\citep{qwen25coder}. More details for nutrition estimation model pairs are provided in Appendix~\ref{sec:nutrition-estimation-model-training}.

As a conceptual shorthand, we use ``pretrained model'' throughout our discussion to refer to the off-the-shelf instruction-tuned model that serves as the starting point for the task-specific supervised fine-tuning. We note that this differs from the stricter usage of \emph{pretrained} to denote a base model prior to instruction tuning.

\begin{table*}[t]
  \centering
  \caption{\textbf{Parent and SFT model pairs used in math, coding, and nutrition experiments.} All parent models are instruction-tuned. Models are listed in the order of increasing number of parameters. We use the abbreviation OCR for OpenCodeReasoning in the main paper, consistent with the choice of the original paper.}
  \label{tab:math-and-coding-model-pairs}
  \small
  \setlength{\tabcolsep}{8pt}
  \begin{tabular*}{\textwidth}{@{\extracolsep{\fill}}ll@{}}
    \toprule
    Parent Model & SFT Model \\
    \midrule

    \rowcolor{black!10}[0pt][0pt]
    \multicolumn{2}{@{}c@{}}{\textbf{Mathematics}} \\
    \addlinespace[2pt]
    Qwen2.5-1.5B-Instruct
      & OpenThinker3-1.5B~\citep{guha2026openthoughts}\\
    Qwen2.5-7B-Instruct
      & OpenThinker2-7B~\citep{guha2026openthoughts}\\
    Qwen2.5-7B-Instruct
      & OpenThinker3-7B~\citep{guha2026openthoughts}\\
    Qwen2.5-7B-Instruct
      & Bespoke-Stratos-7B~\citep{bespokelabs2025bespokestratos7b}\\
    Qwen2.5-7B-Instruct
      & DistilQwen-ThoughtX-7B~\citep{cai-etal-2026-reasoning}\\
    Qwen2.5-32B-Instruct
      & s1.1-32B~\citep{muennighoff-etal-2025-s1}\\
    Qwen2.5-32B-Instruct
      & LIMO~\citep{ye2025limo}\\
    Qwen2.5-32B-Instruct
      & OpenThinker2-32B~\citep{guha2026openthoughts}\\

    \midrule
    \rowcolor{black!10}[0pt][0pt]
    \multicolumn{2}{@{}c@{}}{\textbf{Coding}} \\
    \addlinespace[2pt]
    Qwen2.5-Coder-7B-Instruct
      & OlympicCoder-7B~\citep{penedo2025olympiccoder}\\
    Qwen2.5-7B-Instruct
      & OpenCodeReasoning-Nemotron-7B~\citep{ahmad2025opencodereasoning}\\
    Qwen2.5-7B-Instruct
      & OpenCodeReasoning-Nemotron-1.1-7B~\citep{ahmad2025opencodereasoning}\\
    Qwen2.5-14B-Instruct
      & OpenCodeReasoning-Nemotron-14B~\citep{ahmad2025opencodereasoning}\\
    Qwen2.5-14B-Instruct
      & OpenCodeReasoning-Nemotron-1.1-14B~\citep{ahmad2025opencodereasoning}\\
    Qwen2.5-Coder-32B-Instruct
      & OlympicCoder-32B~\citep{penedo2025olympiccoder}\\
    Qwen2.5-32B-Instruct
      & OpenCodeReasoning-Nemotron-32B~\citep{ahmad2025opencodereasoning}\\
    Qwen2.5-32B-Instruct
      & OpenCodeReasoning-Nemotron-1.1-32B~\citep{ahmad2025opencodereasoning}\\

    \midrule
    \rowcolor{black!10}[0pt][0pt]
    \multicolumn{2}{@{}c@{}}{\textbf{Nutrition}} \\
    \addlinespace[2pt]
    Gemma-4-E4B-it
      & Ours (Appendix~\ref{sec:nutrition-estimation-model-training})\\
    Qwen3-4B
      & Ours (Appendix~\ref{sec:nutrition-estimation-model-training})\\
    Qwen3-8B
      & Ours (Appendix~\ref{sec:nutrition-estimation-model-training})\\
    \bottomrule
  \end{tabular*}
\end{table*}

\subsection{Nutrition Estimation Model Training}
\label{sec:nutrition-estimation-model-training}

\paragraph{Models.}
We fine-tune three open-weight instruction-tuned models with full-parameter SFT (no LoRA or other adapters): \texttt{gemma-4-E4B-it}~\citep{gemma4}, \texttt{Qwen3-4B}, and \texttt{Qwen3-8B}~\citep{qwen3}.

\paragraph{Training data.}
All runs use the same data from NutriBench~\citep{dhaliwal2025nutribench}.
We sampled 40,000 English-only meal descriptions for training and 1,000 development examples.

\paragraph{Optimization.}
We train in bfloat16 with gradient checkpointing on 8$\times$NVIDIA A100 80\,GB GPUs using FSDP.
The per-device batch size is 2 with 16 gradient-accumulation steps (effective batch 256), and we use AdamW with a learning rate of $1\times10^{-5}$, no weight decay, a linear schedule, a warmup ratio of 0.03, and 10 epochs, i.e., 1{,}570 optimizer steps.

\subsection{Inference Settings}

For all benchmarks, we use a permissive decoding budget to leave the models sufficient room to generate complete responses. We set the maximum total sequence length, including both the prompt and response, to 32,768 tokens, substantially higher than the typical generation limits used for the selected benchmarks. We adopt this large budget because the math and coding SFT models are predominantly fine-tuned on long reasoning traces. For NutriBench only, we additionally cap the maximum number of response tokens at 8,192.

For each benchmark, we use the same decoding parameters across all models to ensure a fair comparison. We use greedy decoding for most benchmarks. For benchmarks where sampling is commonly used, we follow the officially recommended sampling parameters. For these non-greedy benchmarks, we report performance averaged over three random seeds. For SFT-as-context, each of the three inference runs uses a different context generated by the SFT model.

\subsection{Metrics}

For most of the benchmarks, we use the official evaluation setup. For IFEval, we report the prompt-level strict accuracy according to the official evaluation. For MathIF, we report Hard IF according to the official evaluation. 

For SQuAD2.0, FQuAD2.0, and CoQA, we use LLM-as-a-judge to avoid the unknown biases introduced by logit-based evaluation~\citep{hua-etal-2025-flaw}. We use the LLM-as-a-judge prompt recommended by~\citet{hua-etal-2025-flaw} and Gemini 3.5 Flash as the judge.

For MGSM, we check for the rate of using the answer prefix in the correct language. The prefix is the word for ``answer'' in each non-English language.

For NutriBench, macro MAE is defined as the mean absolute error (MAE) averaged across carbohydrates, fat, protein, and energy. For language consistency and acknowledgment of non-food queries, we heuristically check the presence of phrases such as ``contribute about'' and ``contributes about'' in the response, which indicates an English response and a hallucinated response for these two categories, respectively.

\section{Prompts}

We provide the task prompts and SFT-as-context prompts in this section.

\subsection{Task Prompts}
\label{sec:task-prompts}

Other than LiveCodeBenchIF and NutriBench, we use the official prompt template for evaluation of each benchmark. The custom prompt for LiveCodeBenchIF is shown below, where the only addition other than the original task prompt is a simple sentence asking for 3 lines of comments.

\begin{tcolorbox}[title=LiveCodeBenchIF Prompt Template, colback=gray!5, colframe=gray, fonttitle=\bfseries, breakable]
\{original\_task\_prompt\}\\

Use exactly three lines of comments in your code.
\end{tcolorbox}

The prompt template for NutriBench is shown below. The meal description is inserted at \{meal\_description\}. In real deployment, non-food queries may be inserted at \{meal\_description\} as well, and the model is expected to respond with a human\_reasoning field that acknowledges that there is no food in the query. The prompt template explicitly asks for language consistency and the acknowledgment of non-food queries. However, with this prompt for training and inference, SFT models fail to preserve these general capabilities. 

\begin{tcolorbox}[title=NutriBench Prompt Template, colback=gray!5, colframe=gray, fonttitle=\bfseries, breakable]
You are a nutrition estimation assistant.\\

Your task is to estimate the total nutritional content of the following food or meal description.\\

Food or meal description:\\
``\{meal\_description\}''\\

Before estimating, first determine what kind of input you received. If the description is written in a language other than English, write the entire \textasciigrave human\_reasoning\textasciigrave\ in that same language without switching to English. If the input is unrelated to food or nutrition, explicitly identify it as irrelevant, do not invent a meal, and set all four numeric fields to 0. If it mentions food but provides no quantity or serving size, explicitly state in \textasciigrave human\_reasoning\textasciigrave\ that the amount is unspecified and that the numerical result requires an assumption; then provide your best estimate.\\

Output format:\\
- You may write normal reasoning before the final prediction, including any internal matching, portion, scaling, and summing rationale.\\
- End your response with the final prediction JSON object wrapped exactly in these tags:\\
\textless nutrition\_estimate\_json\textgreater\\
\{\\
\hspace*{1em}``human\_reasoning'': ``brief explanation for a human, in the same language as the meal description'',\\
\hspace*{1em}``carbohydrate\_g'': number,\\
\hspace*{1em}``fat\_g'': number,\\
\hspace*{1em}``protein\_g'': number,\\
\hspace*{1em}``energy\_kcal'': number\\
\}\\
\textless /nutrition\_estimate\_json\textgreater\\
- Keep ``human\_reasoning'' compact, but include the nutrition contribution of each main component. Do not merely say that items were estimated and summed.\\
- For multi-component meals, use a style like: ``The meal consists of A, B, and C. A contributes about ...; B contributes about ...; C contributes about ....''\\
- The per-component nutrition in ``human\_reasoning'' should cover carbohydrate, fat, protein, and energy when possible, using the same units as the JSON fields.\\

Guidelines:\\
- Estimate the total amount for the entire described food or meal.\\
- Use grams for carbohydrate, fat, and protein.\\
- Use kilocalories for energy.
\end{tcolorbox}

\subsection{SFT-as-Context Prompts}
\label{sec:sft-as-context-prompts}

For SFT-as-context, we use two versions of prompts, one for NutriBench, another for math and coding models. In these prompts, the \{task\_prompt\} is the original query in the benchmarks. The nutrition estimation JSON replaces \{nutrition\_estimate\_json\}. The model response after stripping the thinking block replaces \{sft\_response\_thinking\_stripped\}.

For math and coding, the original task prompt is repeated at the end of the SFT-as-context instructions. This design choice is justified by an additional ablation study in Appendix~\ref{sec:additional-ablations-studies}.

\begin{tcolorbox}[title=NutriBench SFT-as-Context Prompt Template, colback=gray!5, colframe=gray, fonttitle=\bfseries, breakable]
\{task\_prompt\}
\\\\
The candidate below usually gives a better nutrition estimate than an independent guess. Treat it as the default answer. Use its numeric values unless there is an obvious hard error: it is for a different food, it ignores a major listed component, it uses a quantity that conflicts with the original description, or its energy is impossible for the stated amount. Do not adjust merely because your own rough estimate is different; only override a specific candidate value when you can point to the original text that makes it impossible.
\\\\
\{nutrition\_estimate\_json\}
\end{tcolorbox}

\begin{tcolorbox}[title=Math and Coding SFT-as-Context Prompt Template, colback=gray!5, colframe=gray, fonttitle=\bfseries, breakable]
Produce a response that fully satisfies the original task.\\
\\
\textless task\textgreater\\
\{task\_prompt\}\\
\textless /task\textgreater\\

\textless reference\textgreater\\
\{sft\_response\_thinking\_stripped\}\\
\textless /reference\textgreater\\

Treat the original task as the authoritative specification. Treat the reference as a candidate final answer, not as a draft that needs improvement.\\

If the reference already satisfies the original task, return it unchanged. Only change it when you identify a concrete correctness error or a violation of an explicit requirement in the original task. In that case, make the smallest necessary correction and preserve everything else.\\

Preserve the reference's scope, granularity, format, and working approach. Do not add optional material, rewrite for style, broaden the answer, or replace a working solution merely because another solution seems preferable.\\

The final response must be self-contained and must follow every requirement of the original task. Return only that response without mentioning the reference or this process.\\

Again, the original task is:\\

\textless task\textgreater\\
\{task\_prompt\}\\
\textless /task\textgreater\\

Use the reference unchanged unless a specific error or explicit requirement violation requires a minimal correction. Return only the final response.
\end{tcolorbox}

Additionally, to prevent degenerate or excessively long SFT responses from exhausting the parent model's available context window, we omit the SFT response if any of the following conditions holds:
\begin{enumerate}[leftmargin=*]
    \item The SFT model exhausts its allocated token budget without producing a final answer.
    \item The SFT response exhibits pathological repetition of token sequences.
    \item The combined length of the query, SFT response, and prompt template leaves no generation budget for the parent model.
\end{enumerate}

In these cases, we fall back to generating a response from the parent model using the original query alone. The fallback rate is generally low under the permissive token budget. For example, the fallback rate of MathIF is only 3.73\%, corresponding to 376 of 10,080 responses, across eight model pairs and seeds 0, 1, and 2. The gaps on fine-tuned and general capabilities are over 20 percentage points, much larger than 3.73\%. Hence, the improved capabilities of SFT-as-context cannot be attributed to the fallback conditions. 

\section{More Examples of Forgetting and Mitigation}
\label{sec:more-examples-of-forgetting-and-mitigation}

In this section, we first provide example queries and responses to demonstrate forgetting of SFT models and the mitigation by SFT-as-context (Appendix~\ref{sec:mathif-and-livecodebenchif-general-failures}). Following the discussion in \Cref{sec:sft-as-context-successfully-mitigates-forgetting}, we also provide examples from a case study of the largest remaining gap of general capability, which is from the MathIF benchmark (Appendix~\ref{sec:sft-as-context-failure-mode}).

\subsection{MathIF and LiveCodeBenchIF General Failures}
\label{sec:mathif-and-livecodebenchif-general-failures}

Here, we provide two examples, one from MathIF and one from LiveCodeBenchIF. Each example contains four parts: the query, the parent model response, the SFT model response, and the SFT-as-context response. In both examples, the parent models are weaker on fine-tuned capabilities, providing incorrect answers. The SFT models are weaker on general capabilities, ignoring the explicit formatting instructions. The SFT-as-context method generates a response with the correct answer and formatting, outperforming both parent and SFT models.

The MathIF example query, parent model response, SFT model response, and SFT-as-context response are shown below. The parent model is Qwen2.5-32B-Instruct, and the SFT model is OpenThinker2-32B. The parent uses the correct remainder-theorem strategy but incorrectly computes $f(1)=-5$ and $f(-1)=11$, leading to a wrong answer of $-8x+3$. Its response is lowercase, satisfying the formatting constraint. The SFT model correctly computes the answer but uses uppercase letters both in prose and notations (using the uppercase $R$ for remainder). SFT-as-context preserves the correct derivation and result, while using lowercase prose and notation $r$.

\begin{tcolorbox}[title=MathIF Example Query, colback=gray!5, colframe=gray, fonttitle=\bfseries, breakable]
Let $f(x) = x^{10}+5x^9-8x^8+7x^7-x^6-12x^5+4x^4-8x^3+12x^2-5x-5$.\\

Without using long division (which would be horribly nasty!), find the remainder when $f(x)$ is divided by $x^2-1$. Let's think step by step and output the final answer within \textbackslash boxed\{\}. Your entire response should be in English, and in all lowercase letters. No capital letters are allowed.
\end{tcolorbox}

\begin{tcolorbox}[title=Parent Response, colback=gray!5, colframe=gray, fonttitle=\bfseries, breakable]
we start by recognizing that when we divide a polynomial $f(x)$ by $x^2-1$, the remainder $r(x)$ will be a polynomial of degree less than 2. thus, we can express $r(x)$ as $ax+b$, where $a$ and $b$ are constants to be determined.

since $f(x) = (x^2-1)q(x) + r(x)$ for some quotient $q(x)$ and remainder $r(x) = ax + b$, we can substitute $x = 1$ and $x = -1$ into $f(x)$ to solve for $a$ and $b$. this is because $x^2-1$ becomes 0 when $x = 1$ or $x = -1$, simplifying the equation to just $r(x)$.

substituting $x = 1$:

\[ f(1) = (1)^{10} + 5(1)^9 - 8(1)^8 + 7(1)^7 - (1)^6 \]
\[- 12(1)^5 + 4(1)^4 - 8(1)^3 + 12(1)^2 - 5(1) - 5 = a(1) + b \]

\[ f(1) = 1 + 5 - 8 + 7 - 1 - 12 + 4 - 8 + 12 - 5 - 5 = a + b \]

\[ f(1) = -5 = a + b \]

substituting $x = -1$:

\[ f(-1) = (-1)^{10} + 5(-1)^9 - 8(-1)^8 + 7(-1)^7 - (-1)^6 - 12(-1)^5\]
\[ + 4(-1)^4 - 8(-1)^3 + 12(-1)^2 - 5(-1) - 5 = a(-1) + b \]

\[ f(-1) = 1 - 5 - 8 - 7 - 1 + 12 + 4 + 8 + 12 + 5 - 5 = -a + b \]

\[ f(-1) = 11 = -a + b \]

now we have a system of equations:
\[ a + b = -5 \]
\[ -a + b = 11 \]

adding these equations:
\[ (a + b) + (-a + b) = -5 + 11 \]
\[ 2b = 6 \]
\[ b = 3 \]

substituting $b = 3$ into one of the original equations:
\[ a + 3 = -5 \]
\[ a = -8 \]

thus, the remainder when $f(x)$ is divided by $x^2-1$ is $-8x+3$.

\[\boxed{-8x+3}\]
\end{tcolorbox}

\begin{tcolorbox}[title=SFT Response, colback=gray!5, colframe=gray, fonttitle=\bfseries, breakable]
To find the remainder when \( f(x) \) is divided by \( x^2 - 1 \), we use the Remainder Theorem for quadratic divisors. The remainder will be a linear polynomial \( R(x) = ax + b \). Evaluating \( f(x) \) at the roots of \( x^2 - 1 \) (which are \( x = 1 \) and \( x = -1 \)) gives us:

\[
\begin{cases}
f(1) = a(1) + b = a + b = -10 \\
f(-1) = a(-1) + b = -a + b = 16
\end{cases}
\]

Solving this system:

1. Subtract the second equation from the first:
   \[
   (a + b) - (-a + b) = -10 - 16 \implies 2a = -26 \implies a = -13
   \]
2. Substitute \( a = -13 \) into \( a + b = -10 \):
   \[
   -13 + b = -10 \implies b = 3
   \]

Thus, the remainder is \( R(x) = -13x + 3 \).

\[
\boxed{-13x + 3}
\]
\end{tcolorbox}

\begin{tcolorbox}[title=SFT-as-Context Response, colback=gray!5, colframe=gray, fonttitle=\bfseries, breakable]
to find the remainder when \( f(x) \) is divided by \( x^2 - 1 \), we use the remainder theorem for quadratic divisors. the remainder will be a linear polynomial \( r(x) = ax + b \). evaluating \( f(x) \) at the roots of \( x^2 - 1 \) (which are \( x = 1 \) and \( x = -1 \)) gives us:

\[
\begin{cases}
f(1) = a(1) + b = a + b = -10 \\
f(-1) = a(-1) + b = -a + b = 16
\end{cases}
\]

solving this system:

1. subtract the second equation from the first:
   \[
   (a + b) - (-a + b) = -10 - 16 \implies 2a = -26 \implies a = -13
   \]
2. substitute \( a = -13 \) into \( a + b = -10 \):
   \[
   -13 + b = -10 \implies b = 3
   \]

thus, the remainder is \( r(x) = -13x + 3 \).

\[
\boxed{-13x + 3}
\]
\end{tcolorbox}

The LiveCodeBenchIF example query, parent model response, SFT model response, and SFT-as-context response are shown below. The parent model is Qwen2.5-32B-Instruct, and the SFT model is OpenCodeReasoning-Nemotron-32B. In the example query, the last sentence specifying the number of comments is the only addition from us to the original problem description (Appendix~\ref{sec:experiment-details-benchmarks}). The parent response has three comment lines but sorts integers descending with \texttt{nums.sort(reverse=True)}. For [1,2,3], this gives 11\textbar{}10\textbar{}1 = 11101 (29), while the correct maximum is 11\textbar{}1\textbar{}10 = 11110 (30). The SFT response is correct but fully ignores the instruction of using 3 comments. The SFT-as-context response keeps the same executable Python structure from the SFT context and adds exactly three comment lines.

\begin{tcolorbox}[title=LiveCodeBenchIF Example Query, colback=gray!5, colframe=gray, fonttitle=\bfseries, breakable]
You will be given a question (problem specification) and will generate a correct Python program that matches the specification and passes all tests.\\

Question: You are given an array of integers nums of size 3.\\
Return the maximum possible number whose binary representation can be formed by concatenating the binary representation of all elements in nums in some order.\\
Note that the binary representation of any number does not contain leading zeros.\\
 
Example 1:\\

Input: nums = [1,2,3]\\
Output: 30\\
Explanation:\\
Concatenate the numbers in the order [3, 1, 2] to get the result ``11110'', which is the binary representation of 30.\\

Example 2:\\

Input: nums = [2,8,16]\\
Output: 1296\\
Explanation:\\
Concatenate the numbers in the order [2, 8, 16] to get the result ``10100010000'', which is the binary representation of 1296.\\

Constraints:\\

nums.length == 3\\
1 \textless = nums[i] \textless = 127\\

You will use the following starter code to write the solution to the problem and enclose your code within delimiters.\\
\textasciigrave\textasciigrave\textasciigrave python\\
class Solution:\\
\hspace*{2em}def maxGoodNumber(self, nums: List[int]) -\textgreater{} int:\\
\textasciigrave\textasciigrave\textasciigrave\\

Use exactly three lines of comments in your code.
\end{tcolorbox}

\begin{tcolorbox}[title=Parent Response, colback=gray!5, colframe=gray, fonttitle=\bfseries, breakable]
\includegraphics[width=\linewidth]{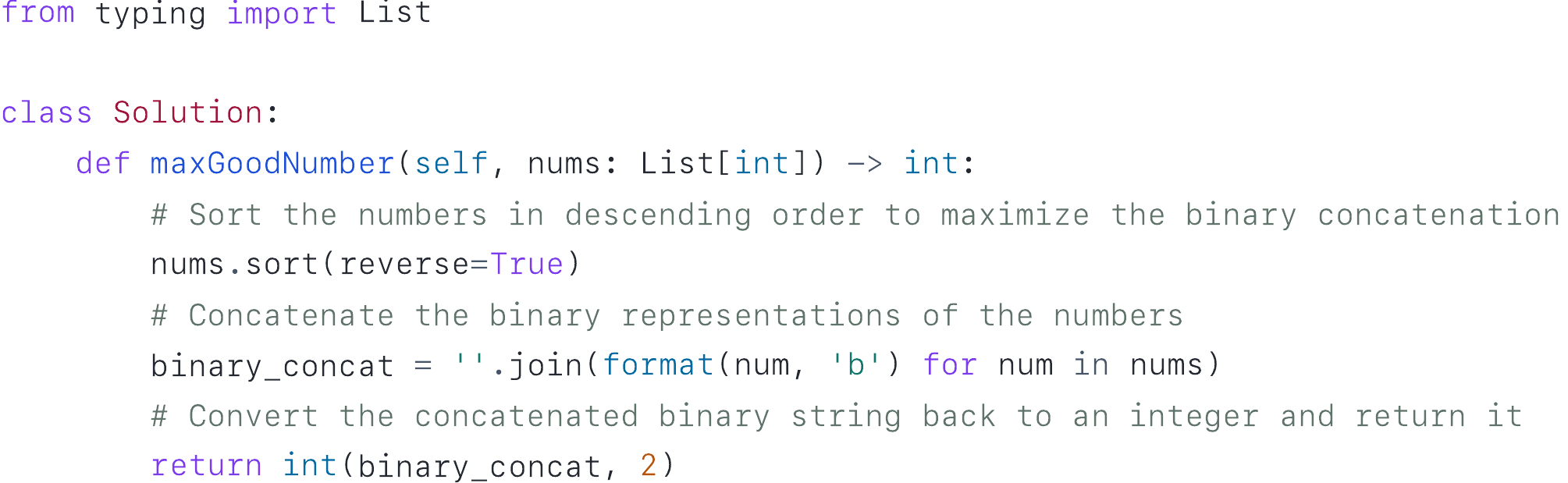}
\end{tcolorbox}

\begin{tcolorbox}[title=SFT Response, colback=gray!5, colframe=gray, fonttitle=\bfseries, breakable]
\includegraphics[width=\linewidth]{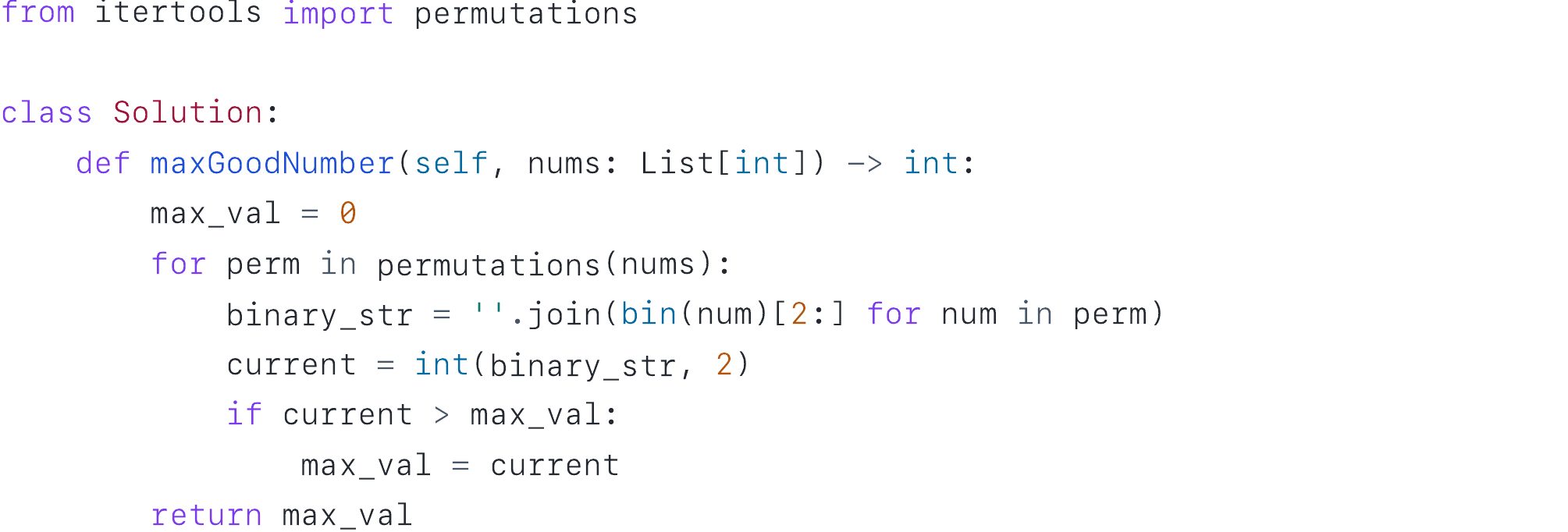}
\end{tcolorbox}

\begin{tcolorbox}[title=SFT-as-Context Response, colback=gray!5, colframe=gray, fonttitle=\bfseries, breakable]
\includegraphics[width=\linewidth]{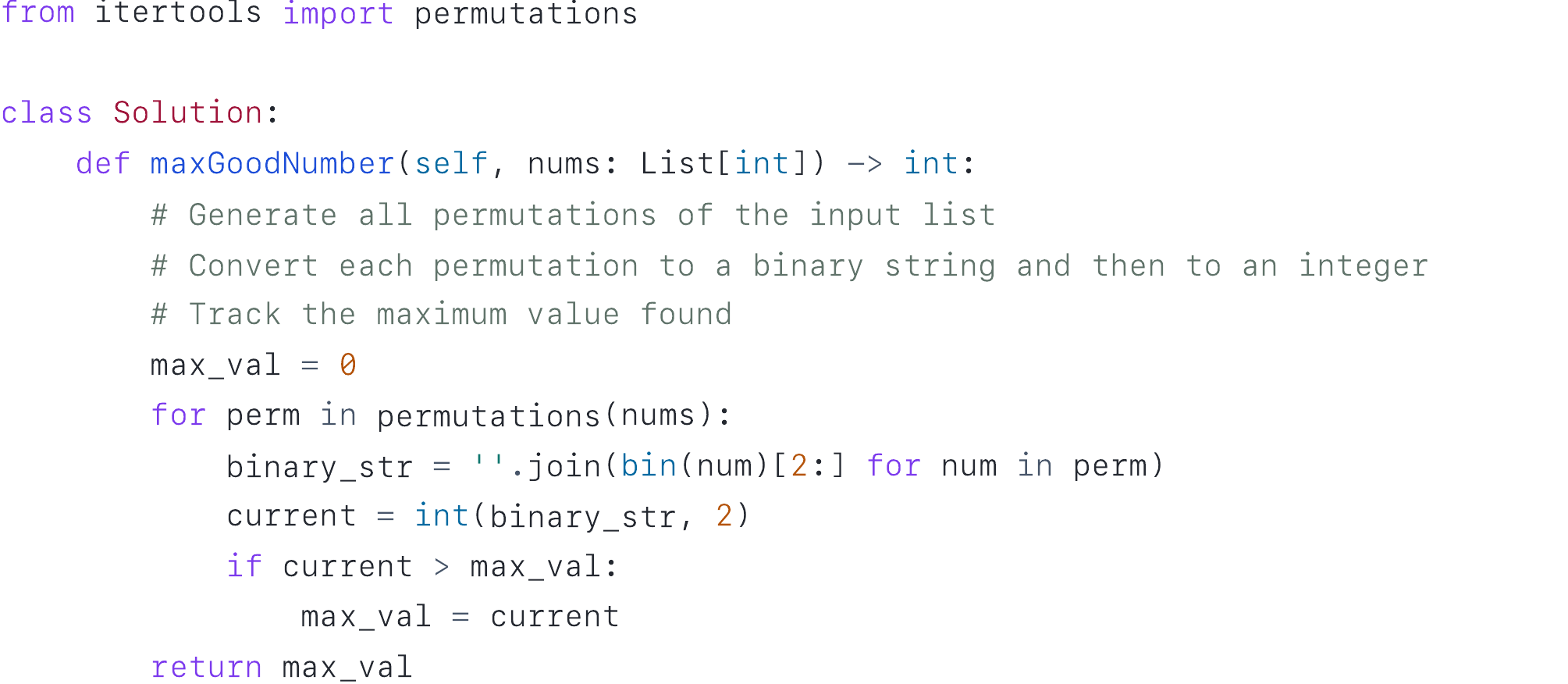}
\end{tcolorbox}

\subsection{SFT-as-Context Failure Mode: Boxed Answer Only}
\label{sec:sft-as-context-failure-mode}

We provide two examples where Qwen2.5-7B-Instruct fails to follow the original task instructions during SFT-as-context and only outputs a boxed answer.

In the first example below, only outputting the boxed answer violates the requirement of using the word ``number.''

\begin{tcolorbox}[title=Example Query 1, colback=gray!5, colframe=gray, fonttitle=\bfseries, breakable]
Zeke's baseball team has 7 more players than Carlton's. If Carlton's team has 13 players, how many players are there in both teams combined? Let's think step by step and output the final answer after ``\#\#\#\#''. Let's think step by step and output the final answer within \textbackslash boxed\{\}. In your response, the word ``number'' should appear at least 1 times. In your entire response, refrain from the use of any commas.
\end{tcolorbox}

\begin{tcolorbox}[title=Parent Response, colback=gray!5, colframe=gray, fonttitle=\bfseries, breakable]
Carlton's team number 13 players; Zeke's has 7 more making it 13+7=20 players; Combined both teams have 13+20=33 players \#\#\#\# \boxed{33}
\end{tcolorbox}

\begin{tcolorbox}[title=SFT Response, colback=gray!5, colframe=gray, fonttitle=\bfseries, breakable]
First find Zeke's team's number of players:13+7=20. Then add both teams:20+13=33.\#\#\#\# \boxed{33}\#\#\#\#
\end{tcolorbox}

\begin{tcolorbox}[title=SFT-as-Context Response, colback=gray!5, colframe=gray, fonttitle=\bfseries, breakable]
\boxed{33}\#\#\#\#
\end{tcolorbox}

In the second example below, only outputting the boxed answer violates the requirement of highlighting at least two sections.

\begin{tcolorbox}[title=Example Query 2, colback=gray!5, colframe=gray, fonttitle=\bfseries, breakable]
A pet store currently has 5 dogs, 2 cats, and 10 birds. How many legs in total do the pets in the store have? Let's think step by step and output the final answer after ``\#\#\#\#''. Let's think step by step and output the final answer within \textbackslash boxed\{\}. Highlight at least 2 sections in your answer with markdown, i.e. *highlighted section*.
\end{tcolorbox}

\begin{tcolorbox}[title=Parent Response, colback=gray!5, colframe=gray, fonttitle=\bfseries, breakable]
Sure, let's break this down step by step.\\

First, we need to determine the number of legs each type of pet has:\\
- Dogs have 4 legs each.\\
- Cats have 4 legs each.\\
- Birds have 2 legs each.\\

Next, we calculate the total number of legs for each type of pet:\\
- *For the dogs*: There are 5 dogs, and each dog has 4 legs. Therefore, the total number of legs for the dogs is \(5 \times 4 = 20\).\\
- *For the cats*: There are 2 cats, and each cat has 4 legs. Therefore, the total number of legs for the cats is \(2 \times 4 = 8\).\\
- *For the birds*: There are 10 birds, and each bird has 2 legs. Therefore, the total number of legs for the birds is \(10 \times 2 = 20\).\\

Finally, we sum the total number of legs from all the pets:\\
- The total number of legs is \(20 + 8 + 20 = 48\).\\

Therefore, the total number of legs the pets in the store have is \boxed{48}.
\end{tcolorbox}

\begin{tcolorbox}[title=SFT Response, colback=gray!5, colframe=gray, fonttitle=\bfseries, breakable]
The pet store has three types of pets: dogs, cats, and birds.  \\

**1. Calculating legs for dogs:**  \\
There are 5 dogs, each with 4 legs. 
\[ 5 \text{ dogs} \times 4 \text{ legs/dog} = 20 \text{ legs} \] 

**2. Calculating legs for cats:**  \\
There are 2 cats, each with 4 legs. 
\[ 2 \text{ cats} \times 4 \text{ legs/cat} = 8 \text{ legs} \] 

**3. Calculating legs for birds:**  \\
There are 10 birds, each with 2 legs.  
\[ 10 \text{ birds} \times 2 \text{ legs/bird} = 20 \text{ legs} \]  

**Total legs:**  \\
\[ 20 \text{ (dogs)} + 8 \text{ (cats)} + 20 \text{ (birds)} = 48 \text{ legs} \]  

**Final Answer:**  \\
\boxed{48}
\end{tcolorbox}

\begin{tcolorbox}[title=SFT-as-Context Response, colback=gray!5, colframe=gray, fonttitle=\bfseries, breakable]
\#\#\#\#\\

\boxed{48}
\end{tcolorbox}

\section{Attention Visualization}
\label{sec:attention-visualization}

To investigate how strongly the model attends to the provided context, we analyze the attention weights assigned to prompt tokens during response generation. For each response token, we extract its attention weights over all prompt tokens (which we refer to as ``prompt-directed attention''), excluding attention to preceding response tokens. We then normalize these weights over the prompt tokens so that they sum to 100\%, with each prompt token's share proportional to its attention weight. Finally, we average the normalized prompt-directed attention over response tokens, attention heads, and transformer layers. Chat wrappers and special tokens are excluded from the calculation.

Formally, the share of prompt-directed attention is calculated as follows.

For each example $e$, the prompt is split into three regions: the query $Q$, the SFT-as-context instructions $I$, and the SFT context $C$. Let $A_{e,\ell,h,i,j}$ denote the attention weight assigned to prompt token $i$ by head $h$ in layer $\ell$, for generating response token $j$. Let $w_{e,i,R}\in\{0,1\}$ be a weight that indicates which region $R\in\{Q,I,C\}$ a prompt token $i$ belongs to. This weight is $1$ for a token inside $R$ and $0$ for a token outside $R$.

We first sum the attention assigned to each region:
\begin{equation*}
    m_{e,\ell,h,j}(R)
    = \sum_{i} w_{e,i,R}\,A_{e,\ell,h,i,j},
    \qquad R\in\{Q,I,C\}.
\end{equation*}
The total attention directed to the prompt is then
\begin{equation*}
    d_{e,\ell,h,j}
    = m_{e,\ell,h,j}(Q)
    + m_{e,\ell,h,j}(I)
    + m_{e,\ell,h,j}(C).
\end{equation*}
The share of the attention weight for each prompt region, for each layer, head, and response token $j$ is then:
\begin{equation*}
    s_{e,\ell,h,j}(R)
    = \frac{m_{e,\ell,h,j}(R)}{d_{e,\ell,h,j}}.
\end{equation*}

Next, let $\mathcal{L}$ be the included attention layers, $H$ the number of
attention heads per layer, and $J$ the set of response tokens. The percentage share for each region $R$ in one example $e$ is
\begin{equation*}
    S_e(R)
    = \frac{100}{|\mathcal{L}|\,H\,|J|}
      \sum_{\ell\in\mathcal{L}}
      \sum_{h=1}^{H}
      \sum_{j\in J}
      s_{e,\ell,h,j}(R).
\end{equation*}
We use all generated non-special tokens, all eight
heads ($H=8$), and the seven full-attention layers
$\mathcal{L}=\{6,12,18,24,30,36,42\}$ in Gemma-4-E4B-it.

Finally, for a category containing $N$ examples, we average the per-example
percentages to obtain the final share of prompt-directed attention for each region:
\begin{equation*}
    S_{\mathrm{category}}(R)
    = \frac{1}{N}\sum_{e=1}^{N}S_e(R).
\end{equation*}
Here, the category is relevant or irrelevant, and the number of queries we sample for each category is $N=100$.

\Cref{fig:attention-comparison-token-level} shows the token-level distribution of prompt-directed attention. For token-level visualization, we treat each prompt token as an individual region, while keeping the calculation the same. The parent model places greater attention on tokens in the original instruction specifying that non-food queries should be flagged as irrelevant. This observation provides a possible explanation for why SFT-as-context recovers general instruction-following capabilities even when the context is generated by an SFT model that has lost these capabilities.

\begin{figure}[t]
    \centering
    \includegraphics[width=\linewidth]{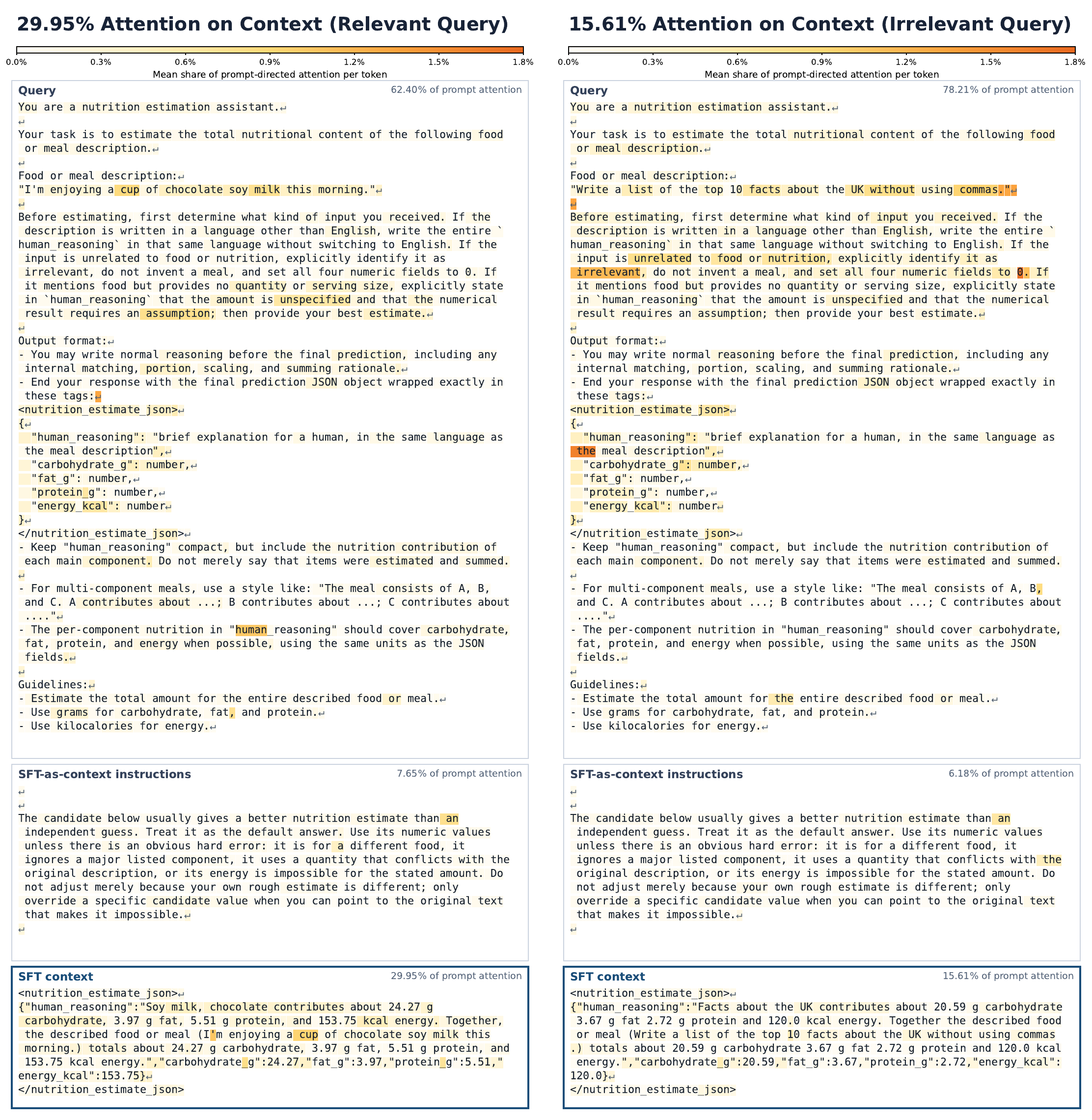}
    \caption{\textbf{The parent model attends less to the SFT context when the context is hallucinated for non-food queries.} These two examples show that the mean share of prompt-directed attention is higher for useful context (29.95\%) than for hallucinated context (15.61\%). Words like ``unrelated'' and ``irrelevant'' are more attended to when the query does not contain a food, demonstrating that the parent model is not distracted by hallucinated context and adheres to the original task instructions.}
    \label{fig:attention-comparison-token-level}
\end{figure}

\section{RL Models Forget Less}
\label{sec:rl-models-forget-less}

We have evaluated SFT-as-context in a controlled setting where the model providing the context is obtained through standard SFT. Since SFT-as-context is agnostic to the post-training method used to obtain the context-providing model, we would like to further investigate whether it generalizes to model pairs produced using alternative post-training methods. Specifically, we consider the model pairs in \Cref{tab:rl-results}, where the parent models belong to the DeepSeek-R1-Distill-Qwen family~\citep{guo2025deepseek} and the child models are further trained with reinforcement learning (RL). However, consistent with prior work~\citep{chen2026retaining}, we find that RL post-training induces negligible forgetting of instruction-following capabilities, as measured by IFEval. Hence, we do not apply SFT-as-context (or RL-as-context in this case) to close the gaps that are already small.

\begin{table*}[t]
  \centering
  \caption{\textbf{RL-trained models do not show forgetting of instruction-following capability compared to their parent, as measured by IFEval.} All RL models in this table share DeepSeek-R1-Distill-Qwen of the corresponding size (1.5B or 7B) as their parent model. The metric is prompt-level strict accuracy (PSA) of IFEval. $\Delta$ is RL minus parent in percentage points.}
  \label{tab:rl-results}
  \small
  \setlength{\tabcolsep}{6pt}
  \renewcommand{\arraystretch}{1.05}
  \begin{tabular*}{\textwidth}{@{\extracolsep{\fill}}lccc@{}}
    \toprule
    RL Model & Parent PSA & RL PSA & $\Delta$ (pp) \\
    \midrule
    DeepScaleR-1.5B-Preview~\citep{deepscaler2025} & 36.78 & 39.00 & $+2.22$ \\
    FastCuRL-1.5B-Preview~\citep{song-etal-2025-fastcurl} & 36.78 & 33.27 & $-3.51$ \\
    FastCuRL-1.5B-V2~\citep{song-etal-2025-fastcurl} & 36.78 & 36.04 & $-0.74$ \\
    FastCuRL-1.5B-V3~\citep{song-etal-2025-fastcurl} & 36.78 & 35.86 & $-0.92$ \\
    DeepMath-1.5B~\citep{he2026deepmath} & 36.78 & 37.52 & $+0.74$ \\
    AReaL-1.5B-Preview-Stage-3~\citep{fu2025areal} & 36.78 & 37.34 & $+0.55$ \\
    STILL-3-1.5B-preview~\citep{chen2025empirical} & 36.78 & 36.78 & $0.00$ \\
    RLinf-math-1.5B~\citep{yu2026rlinf} & 36.78 & 43.25 & $+6.47$ \\
    AceMath-RL-Nemotron-7B~\citep{liu-etal-2025-acemath} & 58.78 & 56.75 & $-2.03$ \\
    Skywork-OR1-Math-7B~\citep{he2025skywork} & 58.78 & 55.64 & $-3.14$ \\
    Polaris-7B-Preview~\citep{Polaris2025} & 58.78 & 56.75 & $-2.03$ \\
    \midrule
    \textbf{Average (11 pairs)} & 42.78 & 42.56 & $-0.22$ \\
    \bottomrule
  \end{tabular*}
\end{table*}

\section{A Case Study on Safety}
\label{sec:a-case-study-on-safety}

An LLM could be adversarially fine-tuned to remove safety guardrails. Here, the fine-tuned capability directly conflicts with its general capabilities. This setting differs from the other experiments in this paper, where fine-tuned and general capabilities are orthogonal.

To examine whether SFT-as-context favors fine-tuned or general capabilities under such a conflict, we use the widely used series of jailbroken models released by Huihui AI on Hugging Face.\footnote{\url{https://huggingface.co/huihui-ai/models}} Although these models use an abliteration process that is not SFT~\citep{arditi2024refusal}, we nevertheless simulate a setting where an SFT model provides harmful responses. We test the parent and abliterated models on HarmBench~\citep{harmbench}. \Cref{tab:safety-results} shows that SFT-as-context largely preserves the harmful responses produced by the abliterated model. In other words, when fine-tuned and general capabilities conflict, the parent model follows the fine-tuned capability under SFT-as-context. One possible explanation is that our SFT-as-context instruction does not explicitly impose safety requirements. Consequently, the parent model may focus on verifying whether the response satisfies the task requirements without first determining whether those requirements are safe to fulfill. Future work could investigate how SFT-as-context instructions can be optimized to prioritize the recovery of specific general capabilities.

\begin{table*}[t]
  \centering
  \caption{\textbf{SFT-as-context fails to recover the safety capability of the parent model when the context is adversarial.} Abliterated models show far higher attack success rates (ASR) on HarmBench than their parent models. SFT-as-context largely preserves the harmfulness in the context, with an average recovery rate of only 16.44\%. The average recovery rate is calculated from the averaged scores.}
  \label{tab:safety-results}
  \small
  \setlength{\tabcolsep}{6pt}
  \renewcommand{\arraystretch}{1.05}
  \begin{tabular*}{\textwidth}{@{\extracolsep{\fill}}lcccc@{}}
    \toprule
    Parent Model & Parent & Abliterated & SFT-as-Context & Recovery \\
    \midrule
    Qwen2.5-7B-Instruct
      & \textbf{19.00} & 63.25 & \underline{60.50} & 6.21 \\
    Qwen2.5-32B-Instruct
      & \textbf{14.50} & 72.50 & \underline{62.00} & 18.10 \\
    Qwen3-4B
      & \textbf{28.00} & 66.25 & \underline{65.75} & 1.31 \\
    Qwen3-8B
      & \textbf{24.25} & 71.00 & \underline{67.00} & 8.56 \\
    Gemma-4-E4B-it
      & \textbf{9.50} & 72.75 & \underline{71.75} & 1.58 \\
    Gemma-4-12B-it
      & \textbf{8.75} & 70.75 & \underline{46.75} & 38.71 \\
    Gemma-4-31B-it
      & \textbf{18.00} & 81.00 & \underline{62.00} & 30.16 \\
    \midrule
    \textbf{Average (7 pairs)}
      & \textbf{17.43} & 71.07 & \underline{62.25} & 16.44 \\
    \bottomrule
  \end{tabular*}
\end{table*}

\section{Forgetting Cannot Be Mitigated by Oracle Routing}

Given a pair of parent and SFT models, queries can be routed at inference time. Queries requiring fine-tuned capabilities can be answered by the SFT model, while those requiring general capabilities can be answered by the parent model. However, real-world queries often require both fine-tuned and general capabilities simultaneously. In such cases, routing between the two models may be insufficient, since each query is ultimately answered by only one model.

To demonstrate this limitation, we compare SFT-as-context with an oracle router that selects the better of the responses generated by the parent and SFT models. We evaluate performance using the joint success rate, which requires a response to satisfy both the fine-tuned and general capabilities. For MathIF, joint success requires both a correct answer and compliance with all constraints (hard instruction-following). For LiveCodeBenchIF, it requires both Pass@1 and exactly three comment lines. As shown in Tables~\ref{tab:mathif-conjunction-results} and~\ref{tab:livecodebenchif-conjunction-results}, SFT-as-context outperforms even the oracle router. These results highlight a fundamental limitation of routing. Even with oracle selection between the parent and SFT models, neither model alone can reliably satisfy requirements that depend on both sets of capabilities. In contrast, SFT-as-context enables the two capabilities to be combined within a single response.

\begin{table*}[t]
  \centering
  \caption{\textbf{MathIF joint-success rates show that SFT-as-context outperforms the oracle router.} A response counts as a joint success when the answer is correct and Hard IF is satisfied. Improvement is defined as SFT-as-context performance minus oracle performance. Best results are bolded and second-best results are underlined.}
  \label{tab:mathif-conjunction-results}
  \small
  \setlength{\tabcolsep}{6pt}
  \begin{tabular*}{\textwidth}{@{\extracolsep{\fill}}lccccc@{}}
    \toprule
    SFT Model & Parent & SFT & Oracle & SFT-as-Context & Improvement \\
    \midrule
    OpenThinker3-1.5B
      & 3.6 & 6.7 & \underline{8.3} & \textbf{10.2} & $+1.9$ \\
    OpenThinker2-7B
      & 17.4 & 13.3 & \underline{22.6} & \textbf{28.3} & $+5.7$ \\
    OpenThinker3-7B
      & 17.4 & 13.6 & \underline{23.3} & \textbf{30.0} & $+6.7$ \\
    Bespoke-Stratos-7B
      & \underline{17.4} & 13.6 & \textbf{22.4} & \textbf{22.4} & $0.0$ \\
    DistilQwen-ThoughtX-7B
      & 17.4 & 13.1 & \underline{22.4} & \textbf{27.6} & $+5.2$ \\
    s1.1-32B
      & 31.9 & 17.6 & \underline{38.6} & \textbf{41.4} & $+2.9$ \\
    LIMO
      & 31.9 & 22.9 & \underline{41.0} & \textbf{43.8} & $+2.9$ \\
    OpenThinker2-32B
      & 31.9 & 20.2 & \underline{38.8} & \textbf{41.7} & $+2.9$ \\
    \midrule
    \textbf{Average (8 pairs)}
      & 21.1 & 15.1 & \underline{27.2} & \textbf{30.7} & $+3.5$ \\
    \bottomrule
  \end{tabular*}
\end{table*}

\begin{table*}[t]
  \centering
  \caption{\textbf{LiveCodeBenchIF joint-success rates show that SFT-as-context outperforms the oracle router.} A response counts as a joint success when the answer is correct and the number of comments is exactly three. Improvement is defined as SFT-as-context performance minus oracle performance. The only exception where the improvement is negative is OlympicCoder-7B. Best results are bolded and second-best results are underlined. }
  \label{tab:livecodebenchif-conjunction-results}
  \small
  \setlength{\tabcolsep}{6pt}
  \begin{tabular*}{\textwidth}{@{\extracolsep{\fill}}lccccc@{}}
    \toprule
    SFT Model & Parent & SFT & Oracle & SFT-as-Context & Improvement \\
    \midrule
    OlympicCoder-7B
      & 3.4 & \underline{9.3} & \textbf{11.2} & \underline{9.3} & $-1.9$ \\
    OpenCodeReasoning-Nemotron-7B
      & 2.2 & 7.1 & \underline{8.6} & \textbf{10.8} & $+2.2$ \\
    OpenCodeReasoning-Nemotron-1.1-7B
      & 2.2 & 9.0 & \underline{10.1} & \textbf{10.8} & $+0.7$ \\
    OpenCodeReasoning-Nemotron-14B
      & 5.2 & 16.8 & \underline{20.1} & \textbf{25.7} & $+5.6$ \\
    OpenCodeReasoning-Nemotron-1.1-14B
      & 5.2 & 17.2 & \underline{19.8} & \textbf{26.1} & $+6.3$ \\
    OlympicCoder-32B
      & 13.4 & \underline{16.0} & \textbf{23.9} & \textbf{23.9} & $0.0$ \\
    OpenCodeReasoning-Nemotron-32B
      & 9.7 & 21.6 & \underline{26.1} & \textbf{27.2} & $+1.1$ \\
    OpenCodeReasoning-Nemotron-1.1-32B
      & 9.7 & 24.6 & \underline{29.9} & \textbf{31.7} & $+1.9$ \\
    \midrule
    \textbf{Average (8 pairs)}
      & 6.4 & 15.2 & \underline{18.7} & \textbf{20.7} & $+2.0$ \\
    \bottomrule
  \end{tabular*}
\end{table*}

\section{Token-Level Manipulation Fails to Mitigate Forgetting for NutriBench}
\label{sec:token-level-manipulation}

Concurrent work has shown that token-level manipulation methods, such as ensembling and contrastive decoding, can mitigate forgetting on certain benchmarks~\citep{ki2026cpr}. However, we identify a failure case for these methods. Following the implementation of \citet{ki2026cpr}, we use ensembling and contrastive decoding as two strong baselines. For ensembling, we average the token probabilities of the parent and SFT models with equal weights. For contrastive decoding, we treat the SFT model as the expert model and the parent model as the amateur model~\citep{li-etal-2023-contrastive}. From NutriBench, we sample 100 English queries, 100 non-English queries, and 100 non-food queries, and compare these baseline methods with SFT-as-context across all three nutrition model pairs.

\Cref{tab:nutribench-token-level-baselines} shows that, although both token-level baselines sometimes outperform SFT-as-context on fine-tuned capabilities, neither successfully mitigates forgetting of general capabilities. This finding contrasts with prior results on other benchmarks~\citep{ki2026cpr}, where both baselines can often fully mitigate forgetting. We attribute this failure to the specific failure patterns exhibited by the SFT models. For non-English queries, the SFT model has a strong tendency to generate the English word ``contributes'' after the name of a food item. The disproportionately high probability assigned to this token makes it more likely to be selected under both ensembling and contrastive decoding, resulting in poor language consistency. Similarly, for non-food queries, ``contributes'' tends to dominate the token distribution, leading the model to continue to generate hallucinated, non-zero nutritional values. These results suggest that SFT-as-context can address certain shifts in token probability distributions that are difficult to correct through token-level manipulation.

\begin{table*}[t]
  \centering
  \caption{\textbf{Token-level baselines (ensemble and contrastive decoding) fail to recover general capabilities.} The results for each inference setting are based on 100 English, 100 non-English, and 100 non-food queries randomly sampled from NutriBench. Best results are bolded and second-best results are underlined.}
  \label{tab:nutribench-token-level-baselines}
  \small
  \setlength{\tabcolsep}{4pt}
  \begin{tabular*}{\textwidth}{@{\extracolsep{\fill}}lcccc@{}}
    \toprule
    & \multicolumn{2}{c}{Fine-Tuned Capabilities}
    & \multicolumn{2}{c}{General Capabilities} \\
    \cmidrule(lr){2-3}\cmidrule(lr){4-5}
    Inference Setting
      & \shortstack{English Macro\\MAE $\downarrow$}
      & \shortstack{Non-English Macro\\MAE $\downarrow$}
      & \shortstack{Language\\Consistency (\%) $\uparrow$}
      & \shortstack{Acknowledge\\Non-Food (\%) $\uparrow$} \\
    \midrule

    \rowcolor{black!10}[0pt][0pt]
    \multicolumn{5}{@{}c@{}}{\textbf{Gemma-4-E4B-it}} \\
    \addlinespace[2pt]
    Parent
      & 35.6 & 75.4 & \textbf{96} & \underline{97} \\
    SFT
      & \textbf{14.9} & \textbf{54.3} & 4 & 37 \\
    Ensemble
      & \underline{16.1} & 64.4 & \underline{11} & 44 \\
    Contrastive Decoding
      & 17.3 & 63.0 & 1 & 24 \\
    SFT-as-Context
      & 16.4 & \underline{60.6} & \textbf{96} & \textbf{100} \\

    \midrule
    \rowcolor{black!10}[0pt][0pt]
    \multicolumn{5}{@{}c@{}}{\textbf{Qwen3-4B}} \\
    \addlinespace[2pt]
    Parent
      & 39.5 & 99.7 & \textbf{74} & \textbf{99} \\
    SFT
      & \textbf{15.5} & \underline{79.0} & 0 & 18 \\
    Ensemble
      & 56.5 & 155.7 & 23 & 26 \\
    Contrastive Decoding
      & \underline{17.5} & \textbf{57.1} & 0 & 8 \\
    SFT-as-Context
      & 19.4 & 80.3 & \underline{66} & \underline{97} \\

    \midrule
    \rowcolor{black!10}[0pt][0pt]
    \multicolumn{5}{@{}c@{}}{\textbf{Qwen3-8B}} \\
    \addlinespace[2pt]
    Parent
      & 35.7 & 81.5 & \underline{66} & \textbf{100} \\
    SFT
      & \textbf{15.6} & \underline{57.1} & 6 & 8 \\
    Ensemble
      & 28.4 & 92.1 & 36 & 11 \\
    Contrastive Decoding
      & \underline{17.5} & 60.7 & 0 & 6 \\
    SFT-as-Context
      & 20.0 & \textbf{54.2} & \textbf{67} & \underline{96} \\
    \bottomrule
  \end{tabular*}
\end{table*}

\section{SFT-as-Context Incurs Modest Overhead Compared to SFT}
\label{sec:sft-as-context-incurs-modest-overhead-compared-to-SFT}

Since SFT-as-context requires two inference passes, we examine the additional generation overhead introduced by the second pass, averaged across all responses for each benchmark. \Cref{tab:response-token-ratio} shows that, for all math and coding model pairs, the second pass incurs only modest overhead relative to the typically long responses generated by the SFT model in the first pass. Specifically, the ratio of second-pass to first-pass response tokens is at most 13.0\%.

For the nutrition estimation task, however, the parent model generates relatively longer responses during the second pass, as it performs additional reasoning to compare the SFT-generated reference with its own estimate. Nevertheless, the absolute number of generated tokens remains comparable to that of the other tasks, ranging from a few hundred to slightly over one thousand tokens. Overall, these results suggest that the additional generation overhead introduced by SFT-as-context is modest.

\begin{table*}[t]
  \centering
  \caption{\textbf{SFT-as-context incurs modest overhead compared with SFT.} Under each of the 3 domains, we list benchmarks for testing both fine-tuned and general capabilities. Ratio is the number of SFT-as-context response tokens divided by the number of SFT response tokens, reported as percentages. Although SFT-as-context adds a second inference pass after the first inference pass of the SFT model, only a small number of additional tokens is generated in the second pass for most benchmarks. Nutrition estimation is an outlier, where SFT-as-context generates more response tokens to compare the reference with its own estimation.}
  \label{tab:response-token-ratio}
  \small
  \setlength{\tabcolsep}{8pt}
  \begin{tabular*}{\textwidth}{@{\extracolsep{\fill}}lccc@{}}
    \toprule
    Benchmark
      & \shortstack{SFT Response Tokens}
      & \shortstack{SFT-as-Context Response Tokens}
      & Ratio (\%) \\
    \midrule

    \rowcolor{black!10}[0pt][0pt]
    \multicolumn{4}{@{}c@{}}{\textbf{Mathematics}} \\
    \addlinespace[2pt]
    AIME 2024
      & 13,814 & 698 & 5.1 \\
    MathIF
      & 6,833 & 342 & 5.0 \\
    IFEval
      & 7,469 & 972 & 13.0 \\
    MGSM
      & 5,851 & 396 & 6.8 \\

    \midrule
    \rowcolor{black!10}[0pt][0pt]
    \multicolumn{4}{@{}c@{}}{\textbf{Coding}} \\
    \addlinespace[2pt]
    LiveCodeBench
      & 14,347 & 350 & 2.4 \\
    LiveCodeBenchIF
      & 14,885 & 387 & 2.6 \\
    IFEval
      & 10,816 & 523 & 4.8 \\
    SQuAD2.0
      & 10,799 & 8 & 0.1 \\
    FQuAD2.0
      & 7,174 & 39 & 0.6 \\
    CoQA
      & 13,021 & 41 & 0.3 \\
    ACPBench
      & 10,618 & 196 & 1.8 \\

    \midrule
    \rowcolor{black!10}[0pt][0pt]
    \multicolumn{4}{@{}c@{}}{\textbf{Nutrition}} \\
    \addlinespace[2pt]
    NutriBench-English
      & 415 & 823 & 198.1 \\
    NutriBench-Non-English
      & 870 & 1,252 & 143.9 \\
    NutriBench-Non-Food
      & 1,000 & 300 & 30.0 \\
    \bottomrule
  \end{tabular*}
\end{table*}

\section{Additional Ablation Studies}
\label{sec:additional-ablations-studies}

\paragraph{Repeating task instruction helps SFT-as-context.} For math and coding models, we repeat the original task instruction in the SFT-as-context prompt. On MathIF, we perform an ablation study by removing the repeated instructions (from ``Again, the original task is'' to the end of the SFT-as-context instructions). \Cref{tab:mathif-repeat-ablation} shows that repeating the original task instructions significantly helps instruction-following, while barely affecting the fine-tuned math capability.

\begin{table*}[t]
  \centering
  \caption{\textbf{Repeating the original task prompt in the SFT-as-context instructions helps instruction-following.} For the benchmark MathIF, repeating the instruction significantly helps the instruction-following (Hard IF and Soft IF) accuracies of SFT-as-context. All results are percentages, averaged over 8 math model pairs. $\Delta$ is
  SFT-as-context without repeating original task minus SFT-as-context, in percentage
  points.}
  \label{tab:mathif-repeat-ablation}
  \small
  \setlength{\tabcolsep}{5pt}
  \begin{tabular*}{\textwidth}{@{\extracolsep{\fill}}lccccc@{}}
    \toprule
    MathIF Metric
      & Parent
      & SFT
      & SFT-as-Context
      & \shortstack{SFT-as-Context w/o Repeat}
      & $\Delta$ (pp) \\
    \midrule
    Math Correctness
      & 40.6 & 65.0 & 64.0 & 64.2 & $+0.2$ \\
    Hard IF
      & 50.1 & 21.3 & 45.8 & 36.1 & $-9.7$ \\
    Soft IF
      & 67.8 & 41.5 & 65.4 & 56.6 & $-8.8$ \\
    \bottomrule
  \end{tabular*}
\end{table*}

\section{Theoretical Results and Proofs}
\label{sec:theoretical-results-and-proofs}

Given the above model of sequential generation in Section \ref{sec:theoretical-explanation}, we characterize the errors for both in-domain and out-of-domain queries.
We first describe the optimal response distributions under the optimal domain variable using a variable: $\theta^* \in \Theta$, where $\Theta$ is discrete.
As above, the parent model is assumed to be more general-purpose, meaning it has a set of domain expertise $\Theta_{\parent}$.
Let the optimal output, denoted by $Y$, be generated given the prompt in some domain with the domain expertise vector $\theta^*$:
\begin{align}
\label{eq:optimal_reasoning}
    Y \sim \Pr(Y | x, \theta^*).
\end{align}

We denote by $\Pr_{*}$ the above output distribution under optimal domain variable.

\subsection{Additional Intuition for the Bayesian Framework}
\label{sec:bayesian-intuition}

The domain variable $\theta$ captures differences in the
query-response distributions across domains.
For example, nutrition estimation and mathematical reasoning
involve different response distributions, modeled through
$\Pr(Y\mid x,\theta)$.
A parent model trained across multiple domains can represent
several such capabilities; for example, Qwen2.5-1.5B-Instruct supports mathematical reasoning, multilingual
understanding, and code generation.
Fine-tuning on query-response pairs from a single domain
$\theta_{\sft}$ aims to improve the corresponding
domain-specific capability.

Under the Bayesian interpretation
\citep{xie2022an, hu2024unveiling}, the response
distribution is obtained by marginalizing over the domain
variable.
Conceptually, generation first selects a domain according
to its posterior probability and then generates a response
from the corresponding conditional output distribution.
This is a probabilistic interpretation rather than a claim
that the model explicitly performs these two steps.
SFT-as-context supplies the SFT response $y_{\sft}$ and
instructions $I$ as additional evidence for domain inference.
Under our shared conditional-output model, this evidence
changes the domain posterior while leaving the conditional
response distributions fixed.
Thus, the modeled improvement arises from assigning greater
probability to an existing domain capability.

\subsection{Defining the Errors}

\paragraph{Error of Parent Model:} The parent model has plenty of domains ($\Theta_{\parent}$) in its set of capabilities and it need not always generate output dependent on the optimal domain $D$.
Its own output is therefore:
\begin{align*}
    Y_{\parent} \sim \Pr_{\parent}(Y  | x),
\end{align*}

and therefore the output distribution is obtained by marginalizing over all the domains:

\begin{align*}
    \Pr_{\parent}(y | x) = \sum_{\theta \in \Theta_{\parent}} \Pr_{\parent}(y | \theta, x ) \Pr_{\parent}(\theta | x).
\end{align*}

Therefore, the error for the parent model is the KL divergence between the optimal output distribution under the optimal reasoning law and the parent distribution:
\begin{align*}
    \err_{\parent} = KL( \Pr_*(y | x) || \Pr_{\parent}(y | x)).
\end{align*}

\paragraph{Error of SFT Model:} We assume that the SFT model has expertise in one domain specifically $D_{\sft}$ with domain variable $\theta_{\sft}$.
As discussed previously, due to forgetting, the model has a reduced set of domains, denoted by $\Theta_{\sft}$.
Therefore, the output of the SFT model is:
\begin{align*}
    \Pr_{\sft} (y | x) = \sum_{\theta \in \Theta_{\sft}} \Pr_{\sft}(y | \theta, x) \Pr_{\sft}(\theta | x).
\end{align*}

The associated error for the SFT model  is as follows:

\begin{align*}
    \err_{\sft} = KL (\Pr_* (y | x) || \Pr_{\sft}(y | x) ).
\end{align*}

\paragraph{Error of SFT-as-Context:} Similarly, we define the latent trajectory under SFT-as-context (abbreviated as SAC; similar to Equation \ref{eq:optimal_reasoning}).
\begin{align*}
    Y_{\sftc} \sim \Pr_{\sftc} (Y | x, \prompt, y_{\sft}).
\end{align*}

The output distribution is therefore obtained by marginalizing over all domains $D$ under the SAC distribution:

\begin{align*}
    \Pr_{\sftc} (y | x, \prompt, y_{\sft}) = \sum_{\theta \in \Theta_{\parent}} \Pr_{\parent}(y | \theta, x) \Pr_{\sftc}(\theta | x, \prompt, y_{\sft}).
\end{align*}

Therefore, the error here is:
\begin{align*}
    \err_{\sftc} = KL (\Pr_{*}(y| x) ||  \Pr_{\sftc} (y | x, \prompt, y_{\sft})).
\end{align*}

Using these error formulations, we split Theorem \ref{thm:sftc_guarantees} into two theorems and their corresponding proofs, and finally present Theorem \ref{thm:out-of-domain} and its proof.
Each following subsection corresponds to one theorem and its proof.

\subsection{SFT-as-Context In-Domain Error}
\label{sec:sft_in_domain_proof}

Suppose the SFT model is fine-tuned on data from the SFT domain with optimal domain variable $\theta_{\sft}$.
For an in-domain query $x \in D_{\sft}$, assume that
$\theta_{\sft} \in \Theta_{\parent}$ and that the optimal
output law is realizable:
\begin{align}
\label{eq:indomain_realizability}
    \Pr_{\parent}(y \mid \theta_{\sft},x)
    = \Pr_*(y \mid x).
\end{align}
Thus, the parent model can represent the optimal output law,
but may assign little probability to its domain.
All logarithms below are natural.

\begin{assumption}[Domain posterior bounds]
\label{ass:domain_concentration}
There exist $0 < \epsilon < 1$ and $0 \leq \epsilon' < 1$
such that, for each query and realized context under consideration,
\begin{align}
    0 < a_x
    &:= \Pr_{\parent}(\theta_{\sft} \mid x)
    \leq \epsilon, \\
    b_x
    &:= \Pr_{\sftc}(\theta_{\sft}
        \mid x,\prompt,y_{\sft})
    \geq 1-\epsilon'.
\end{align}
\end{assumption}

This assumption corresponds to Assumptions 1 and 2 in Section~\ref{sec:theoretical-explanation}, which we present together formally for ease of exposition in the proof below.

\begin{assumption}[Output separability]
\label{ass:output_separability}
Define the parent model's normalized output mixture over
the remaining domains by
\begin{align}
    Q_{\parent}(y \mid x)
    :=
    \frac{
        \sum_{\substack{\theta \in \Theta_{\parent}\\
                        \theta \neq \theta_{\sft}}}
        \Pr_{\parent}(y \mid \theta,x)
        \Pr_{\parent}(\theta \mid x)
    }{1-a_x}.
\end{align}
There exists $\epsilon'' > 0$ such that, for every
$x \in D_{\sft}$,
\begin{align}
    \operatorname{TV}\!\left(
        \Pr_*(\cdot \mid x),Q_{\parent}(\cdot \mid x)
    \right)
    \geq \epsilon'',
\end{align}
where $\operatorname{TV}(P,Q)
:= \frac12\sum_y |P(y)-Q(y)|$.
\end{assumption}

This assumption is Assumption 3 in Section \ref{sec:theoretical-explanation}. Separability is required for the mixture over the remaining
domains \citep{xie2022an}.
This gives us our first result.

\begin{theorem}[In-domain error reduction]
\label{thm:indomain_error_reduction}
Under Equation~\ref{eq:indomain_realizability} and
Assumptions~\ref{ass:domain_concentration}--\ref{ass:output_separability},
for each query and context satisfying these conditions,
\begin{align}
\label{eq:indomain_error_reduction}
    \err_{\parent}-\err_{\sftc}
    \geq
    2(1-\epsilon)^2(\epsilon'')^2+\log(1-\epsilon').
\end{align}
In particular, $\err_{\sftc}<\err_{\parent}$ whenever
\begin{align*}
    2(1-\epsilon)^2(\epsilon'')^2
    > -\log(1-\epsilon').
\end{align*}
\end{theorem}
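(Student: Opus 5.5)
The plan is to bound the two errors separately, a lower bound on $\err_{\parent}$ and an upper bound on $\err_{\sftc}$, and then subtract.

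\textbf{Upper bound on $\err_{\sftc}$.} Write the SFT-as-context law as a mixture with weight $b_x$ on $\theta_{\sft}$:
\[
\Pr_{\sftc}(y\mid x)=b_x\Pr_{\parent}(y\mid\theta_{\sft},x)+\sum_{\theta\neq\theta_{\sft}}\Pr_{\parent}(y\mid\theta,x)\Pr_{\sftc}(\theta\mid x,\prompt,y_{\sft}).
\]
By realizability (Equation~\ref{eq:indomain_realizability}), the first term equals $b_x\Pr_*(y\mid x)$. The remaining terms are nonnegative, so pointwise
\[
\Pr_{\sftc}(y\mid x)\ge b_x\Pr_*(y\mid x).
\]
Hence
\[
\err_{\sftc}=\sum_y\Pr_*(y\mid x)\log\frac{\Pr_*(y\mid x)}{\Pr_{\sftc}(y\mid x)}\le-\log b_x\le-\log(1-\epsilon'),
\]
using Assumption~\ref{ass:domain_concentration}.

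\textbf{Lower bound on $\err_{\parent}$.} By the definition of $Q_{\parent}$ and realizability,
\[
\Pr_{\parent}(\cdot\mid x)=a_x\Pr_*+(1-a_x)Q_{\parent}.
\]
It follows that
\[
\Pr_*-\Pr_{\parent}=(1-a_x)(\Pr_*-Q_{\parent}),
\]
and therefore
\[
\operatorname{TV}(\Pr_*,\Pr_{\parent})=(1-a_x)\operatorname{TV}(\Pr_*,Q_{\parent})\ge(1-\epsilon)\epsilon''.
\]
This uses $a_x\le\epsilon$ and Assumption~\ref{ass:output_separability}. Pinsker's inequality in the form $\operatorname{KL}(P\|Q)\ge2\operatorname{TV}(P,Q)^2$ (natural logarithms) then gives
\[
\err_{\parent}\ge2(1-\epsilon)^2(\epsilon'')^2.
\]

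\textbf{Combining.} Subtracting the two bounds yields Equation~\ref{eq:indomain_error_reduction} directly. The ``in particular'' clause follows because the right-hand side is then strictly positive under the stated condition.

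\textbf{Care points.} The steps are routine, but a few details need checking. First, the $\log$ bound needs $b_x>0$, which holds since $\epsilon'<1$. Second, one must confirm that $Q_{\parent}$ is a genuine probability distribution, which requires $a_x<1$ (true since $\epsilon<1$). Third, if $\err_{\parent}$ is infinite the inequality is trivially satisfied. The main thing to verify is the exact TV scaling identity, since the whole $(1-\epsilon)^2$ factor comes from it. I do not expect Assumption~2's requirement that the other posterior weights be nonnegative to cause trouble, since those weights are probabilities.
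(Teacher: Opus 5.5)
Your proposal is correct and matches the paper's proof essentially step for step. The mixture domination $\Pr_{\sftc}(y\mid x)\ge b_x\Pr_*(y\mid x)$ gives $\err_{\sftc}\le-\log(1-\epsilon')$, the exact identity $\operatorname{TV}(\Pr_*,\Pr_{\parent})=(1-a_x)\operatorname{TV}(\Pr_*,Q_{\parent})$ together with Pinsker gives $\err_{\parent}\ge 2(1-\epsilon)^2(\epsilon'')^2$, and subtracting the two bounds finishes. One refinement to your care points: the infinite case for $\err_{\parent}$ never arises, because the same domination argument gives $\Pr_{\parent}\ge a_x\Pr_*$ with $a_x>0$, hence $\err_{\parent}\le-\log a_x<\infty$.
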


\begin{proof}
Fix $x$ and the realized context $(\prompt,y_{\sft})$.
By realizability and the definition of $Q_{\parent}$,
\begin{align*}
    \Pr_{\parent}(y \mid x)
    = a_x\Pr_*(y \mid x)
      +(1-a_x)Q_{\parent}(y \mid x).
\end{align*}
Consequently (using Assumption \ref{ass:output_separability}),
\begin{align*}
    \operatorname{TV}\!\left(
        \Pr_*(\cdot \mid x),\Pr_{\parent}(\cdot \mid x)
    \right)
    &=
    (1-a_x)\operatorname{TV}\!\left(
        \Pr_*(\cdot \mid x),Q_{\parent}(\cdot \mid x)
    \right) \\
    &\geq (1-\epsilon)\epsilon''.
\end{align*}
The standard Pinsker's inequality therefore gives
\begin{align}
\label{eq:lower_bounf_parent}
    \err_{\parent}
    \geq 2(1-\epsilon)^2(\epsilon'')^2.
\end{align}

For SFT-as-context, retaining the optimal-domain term in its
nonnegative mixture gives
\begin{align*}
    \Pr_{\sftc}(y \mid x,\prompt,y_{\sft})
    \geq b_x\Pr_*(y \mid x).
\end{align*}
Hence,
\begin{align}
\begin{split}
    \err_{\sftc}
    &=
    \mathbb{E}_{Y\sim\Pr_*(\cdot\mid x)}
    \left[
        \log\frac{\Pr_*(Y \mid x)}
        {\Pr_{\sftc}(Y \mid x,\prompt,y_{\sft})}
    \right] \\
    &\leq -\log b_x
    \leq -\log(1-\epsilon').
\end{split}
\label{eq:sac_loss_bound}
\end{align}
Subtracting this upper bound from the lower bound on
$\err_{\parent}$ (Equation \ref{eq:lower_bounf_parent}) gives us the result.
\end{proof}

\subsection{In-Domain Closeness of SFT-as-Context to SFT}

We recall that the error for the SFT model's response distribution is defined as:
\begin{align*}
    \err_{\sft}
    := D_{\mathrm{KL}}\!\left(
        \Pr_*(\cdot \mid x)
        \,\|\, \Pr_{\sft}(\cdot \mid x)
    \right).
\end{align*}

\begin{assumption}[In-domain SFT realizability and concentration]
\label{ass:sft_accuracy}
The optimal domain $\theta_{\sft}$ belongs to
$\Theta_{\sft}$, and
\begin{align}
    \Pr_{\sft}(y \mid \theta_{\sft},x)
    &= \Pr_*(y \mid x).
\end{align}
Furthermore, there exists $0 \leq \eta < 1$ such that,
for every $x \in D_{\sft}$,
\begin{align}
    c_x
    := \Pr_{\sft}(\theta_{\sft} \mid x)
    \geq 1-\eta.
\end{align}
\end{assumption}

\begin{theorem}[Closeness to SFT]
\label{thm:sft_closeness}
Under the preceding SAC realizability and posterior
concentration assumptions, and
Assumption~\ref{ass:sft_accuracy}, we have
\begin{align}
    \log(1-\epsilon')
    \leq
    \err_{\sft}-\err_{\sftc}
    \leq
    -\log(1-\eta).
\end{align}
Consequently,
\begin{align}
    \left|\err_{\sft}-\err_{\sftc}\right|
    \leq
    \max\left\{
        -\log(1-\eta),
        -\log(1-\epsilon')
    \right\}.
\end{align}
\end{theorem}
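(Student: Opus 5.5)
The argument is elementary and reduces to the two one-sided "mixture-component" bounds already used in the proof of Theorem~\ref{thm:indomain_error_reduction}. The key observation is that both error terms are KL divergences from the same target $\Pr_*(\cdot\mid x)$. Each candidate distribution contains $\Pr_*(\cdot\mid x)$ as a mixture component with weight bounded below. A KL divergence from a distribution to a mixture containing it with weight $w$ is at most $-\log w$ and always at least $0$. So we obtain upper and lower bounds on each error separately and then subtract.

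\textbf{Step 1 (bounds on $\err_{\sft}$).} By Assumption~\ref{ass:sft_accuracy}, $\theta_{\sft}\in\Theta_{\sft}$ and $\Pr_{\sft}(y\mid\theta_{\sft},x)=\Pr_*(y\mid x)$. Dropping all other nonnegative terms of the SFT mixture gives $\Pr_{\sft}(y\mid x)\ge c_x\Pr_*(y\mid x)$ for every $y$. Taking expectations under $\Pr_*$ of the log-ratio, exactly as in Equation~\ref{eq:sac_loss_bound}, yields $\err_{\sft}\le -\log c_x\le -\log(1-\eta)$. Also $\err_{\sft}\ge 0$ by nonnegativity of KL (Gibbs' inequality).

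\textbf{Step 2 (bounds on $\err_{\sftc}$).} Equation~\ref{eq:sac_loss_bound} from the proof of Theorem~\ref{thm:indomain_error_reduction} already gives $\err_{\sftc}\le -\log b_x\le -\log(1-\epsilon')$. That bound used in-domain realizability (Equation~\ref{eq:indomain_realizability}) together with Assumption~\ref{ass:domain_concentration}. Again $\err_{\sftc}\ge 0$.

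\textbf{Step 3 (combine).} The upper bound follows from $\err_{\sft}-\err_{\sftc}\le \err_{\sft}-0\le -\log(1-\eta)$. The lower bound follows from $\err_{\sft}-\err_{\sftc}\ge 0-\err_{\sftc}\ge \log(1-\epsilon')$. The absolute-value statement is immediate: the difference lies in $[\log(1-\epsilon'),-\log(1-\eta)]$, so its magnitude is at most the larger of the two endpoint magnitudes.

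\textbf{Main obstacle.} The only delicate point is well-definedness. We need the log-ratio expectation to be finite, and we need the pointwise mixture bound to pass through the logarithm on the support of $\Pr_*$. Since $c_x\ge 1-\eta>0$ and $b_x\ge 1-\epsilon'>0$, both candidate distributions are positive wherever $\Pr_*$ is. Hence the log-ratios are bounded above by $-\log c_x$ and $-\log b_x$ on that support, and the expectations are finite. No separability assumption is needed for this theorem.
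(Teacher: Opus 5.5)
Your proposal is correct and follows the paper's proof essentially line for line: you lower-bound each mixture by its optimal-domain component to get $0\le\err_{\sft}\le-\log(1-\eta)$ and $0\le\err_{\sftc}\le-\log(1-\epsilon')$, then subtract the two intervals. Your added remark that $c_x,b_x>0$ keeps both log-ratios bounded on the support of $\Pr_*$ is a harmless refinement that the paper leaves implicit.
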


\begin{proof}
Once again, fix an in-domain query $x \in D_{\sft}$ and a realized context
$(\prompt,y_{\sft})$ satisfying the assumptions.
Since the optimal domain $\theta_{\sft}$ is in the SFT model's capabilities $\Theta_{\sft}$, we have:
\begin{align*}
    \Pr_{\sft}(y \mid x)
    &\geq
    c_x\Pr_{\sft}(y \mid \theta_{\sft},x) \\
    &= c_x\Pr_*(y \mid x).
\end{align*}
By nonnegativity of KL divergence and this inequality,
\begin{align*}
    0
    \leq \err_{\sft}
    &=
    \mathbb{E}_{Y\sim\Pr_*(\cdot\mid x)}
    \left[
        \log\frac{\Pr_*(Y \mid x)}
        {\Pr_{\sft}(Y \mid x)}
    \right] \\
    &\leq -\log c_x
    \leq -\log(1-\eta).
\end{align*}
Similarly, the preceding SFT-as-context proof (Equation \ref{eq:sac_loss_bound}) gives us:
\begin{align*}
    0 \leq \err_{\sftc}
    \leq -\log(1-\epsilon').
\end{align*}
Combining these two intervals yields
\begin{align*}
    \log(1-\epsilon')
    \leq
    \err_{\sft}-\err_{\sftc}
    \leq
    -\log(1-\eta),
\end{align*}
which leads to the absolute-error bound.
\end{proof}

Unlike the improvement guarantee over the parent model,
this closeness guarantee does not require output separability.
Together, Theorems \ref{thm:indomain_error_reduction} and \ref{thm:sft_closeness} are presented as Theorem \ref{thm:sftc_guarantees} in Section \ref{sec:theoretical-explanation}.

\subsection{SFT-as-Context Out-of-Domain Error}
\label{sec:sft_out_of_domain_proof}

Consider a domain $D_o$ with domain variable
$\theta_o \in \Theta_{\parent}$, where
$\theta_o \neq \theta_{\sft}$.
For queries $x \in D_o$, assume that the optimal
output distribution is realizable in the parent model:
\begin{align}
\label{eq:ood_realizability}
    \Pr_{\parent}(y \mid \theta_o,x)
    &= \Pr_*(y \mid x), \\
    \Pr_{\parent}(\theta_o \mid x) &> 0.
\end{align}
In simpler terms, this means that the model has a non-zero probability of outputting the optimal response via the optimal task domain posterior probability distribution.
This is referred to as out-of-domain realizability.
Both $\err_{\parent}$ and $\err_{\sftc}$ are
measured relative to this optimal out-of-domain law.

\begin{assumption}[Out-of-domain posterior retention]
\label{ass:ood_retention}
There exists $0 \leq \rho_o < 1$ such that, for every
out-of-domain query and realized context under consideration,
\begin{align}
    \Pr_{\sftc}(\theta
        \mid x,\prompt,y_{\sft})
    \geq
    (1-\rho_o)\Pr_{\parent}(\theta \mid x),
    \qquad \forall \theta \in \Theta_{\parent}.
\end{align}
\end{assumption}

This assumption limits how much the SFT context can suppress
the parent model's existing domain probabilities and corresponds to Assumption 5 in Section \ref{sec:theoretical-explanation}.
It is an additional condition; membership of
$\theta_o$ in $\Theta_{\parent}$ alone need not imply it.

\begin{theorem}[Error guarantees for out-of-domain queries]
\label{thm:ood_error}
Under Equation~\ref{eq:ood_realizability},
Assumption~\ref{ass:ood_retention}, and the
conditional-output model described above,
\begin{align}
\label{eq:ood_error_bound}
    \err_{\sftc}-\err_{\parent}
    \leq -\log(1-\rho_o).
\end{align}
\end{theorem}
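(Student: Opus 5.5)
The plan is a pointwise comparison of the two mixtures followed by monotonicity of the logarithm inside the KL expectation, in the same spirit as the bound on $\err_{\sftc}$ in the proof of Theorem~\ref{thm:indomain_error_reduction}. Fix an out-of-domain query $x\in D_o$ and a realized context $(\prompt,y_{\sft})$ satisfying Assumption~\ref{ass:ood_retention}. Under the shared conditional-output model, both $\Pr_{\parent}(\cdot\mid x)$ and $\Pr_{\sftc}(\cdot\mid x,\prompt,y_{\sft})$ are mixtures of the same components $\Pr_{\parent}(y\mid\theta,x)$, $\theta\in\Theta_{\parent}$. They differ only in their mixing weights.

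\textbf{Step 1 (pointwise domination).} Since every term is nonnegative, Assumption~\ref{ass:ood_retention} applied term by term gives, for every $y$,
\begin{align*}
    \Pr_{\sftc}(y\mid x,\prompt,y_{\sft})
    =\sum_{\theta\in\Theta_{\parent}}\Pr_{\parent}(y\mid\theta,x)\Pr_{\sftc}(\theta\mid x,\prompt,y_{\sft})
    \geq (1-\rho_o)\Pr_{\parent}(y\mid x).
\end{align*}

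\textbf{Step 2 (KL comparison).} Write both errors as expectations under $Y\sim\Pr_*(\cdot\mid x)$. This gives
\begin{align*}
    \err_{\sftc}-\err_{\parent}
    =\mathbb{E}_{Y\sim\Pr_*(\cdot\mid x)}\left[\log\frac{\Pr_{\parent}(Y\mid x)}{\Pr_{\sftc}(Y\mid x,\prompt,y_{\sft})}\right].
\end{align*}
By Step 1, the integrand is at most $-\log(1-\rho_o)$ wherever $\Pr_*(Y\mid x)>0$, which yields the claimed bound.

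\textbf{Main obstacle (well-definedness).} The subtraction must not be of the form $\infty-\infty$, and the expectation must be taken only over $\operatorname{supp}\Pr_*(\cdot\mid x)$. Here the hypothesis $\err_{\parent}<\infty$ is used: it implies $\Pr_{\parent}(y\mid x)>0$ on that support. By Step 1, $\Pr_{\sftc}(y\mid x,\prompt,y_{\sft})>0$ there as well, since $\rho_o<1$. Hence the ratio is finite and positive, and the expectation splits legitimately into the difference of two finite KL terms.

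Alternatively, out-of-domain realizability (Equation~\ref{eq:ood_realizability}) together with $\Pr_{\parent}(\theta_o\mid x)>0$ directly gives $\Pr_{\parent}(y\mid x)\geq\Pr_{\parent}(\theta_o\mid x)\Pr_*(y\mid x)$, which also ensures finiteness. Realizability is otherwise not needed for the inequality itself. No Pinsker-type argument is required, since only an upper bound on the difference is claimed.
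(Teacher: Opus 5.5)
Your proposal is correct and takes essentially the same route as the paper. The paper likewise applies Assumption~\ref{ass:ood_retention} term by term to get $\Pr_{\sftc}(y\mid x,\prompt,y_{\sft})\geq(1-\rho_o)\Pr_{\parent}(y\mid x)$, substitutes this into the denominator of the KL expectation to obtain $\err_{\sftc}\leq\err_{\parent}-\log(1-\rho_o)$, and uses realizability with $\Pr_{\parent}(\theta_o\mid x)>0$ for finiteness, as in your alternative remark; the only cosmetic difference is that it bounds $\err_{\sftc}$ directly rather than writing the difference as a single expectation, which avoids the $\infty-\infty$ bookkeeping you handle explicitly.
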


\begin{proof}
Fix $x \in D_o$ and a realized context
$(\prompt,y_{\sft})$ that satisfies the assumption.
Using posterior retention in the SAC mixture,
\begin{align*}
    &\Pr_{\sftc}(y \mid x,\prompt,y_{\sft}) \\
    &\quad =
    \sum_{\theta \in \Theta_{\parent}}
        \Pr_{\parent}(y \mid \theta,x)
        \Pr_{\sftc}(\theta
            \mid x,\prompt,y_{\sft}) \\
    &\quad \geq
    (1-\rho_o)
    \sum_{\theta \in \Theta_{\parent}}
        \Pr_{\parent}(y \mid \theta,x)
        \Pr_{\parent}(\theta \mid x) \\
    &\quad =
    (1-\rho_o)\Pr_{\parent}(y \mid x).
\end{align*}
The realizability assumption and the positive posterior of $\theta_o$
ensure that both errors are finite. Therefore,
\begin{align*}
    \err_{\sftc}
    &=
    \mathbb{E}_{Y\sim\Pr_*(\cdot\mid x)}
    \left[
        \log\frac{\Pr_*(Y \mid x)}
        {\Pr_{\sftc}(Y \mid x,\prompt,y_{\sft})}
    \right] \\
    &\leq
    \mathbb{E}_{Y\sim\Pr_*(\cdot\mid x)}
    \left[
        \log\frac{\Pr_*(Y \mid x)}
        {(1-\rho_o)\Pr_{\parent}(Y \mid x)}
    \right] \\
    &= \err_{\parent}-\log(1-\rho_o).
\end{align*}
Rearranging proves the result.
\end{proof}

Thus, when the SFT context only weakly suppresses the parent
posterior, the additional out-of-domain error is small:
$-\log(1-\rho_o)=\rho_o+O(\rho_o^2)$ as $\rho_o\to 0$.
This proves Theorem \ref{thm:out-of-domain} from Section \ref{sec:theoretical-explanation} above.

\end{document}